\documentclass[10pt]{article} 
\usepackage[accepted]{tmlr}

    \PassOptionsToPackage{numbers, compress}{natbib}

\usepackage{xargs}

\usepackage[colorinlistoftodos,prependcaption,textsize=tiny]{todonotes}
\newcommandx{\note}[2][1=]{\todo[linecolor=blue,backgroundcolor=blue!15,bordercolor=blue,#1]{JS: #2}}
\newcommandx{\ambar}[2][1=]{\todo[linecolor=teal,backgroundcolor=teal!15,bordercolor=teal,#1]{AP: #2}}




\usepackage[utf8]{inputenc} 
\usepackage[T1]{fontenc}    
\usepackage{hyperref}       
\hypersetup{
    colorlinks,
    linkcolor={red!50!black},
    citecolor={blue!50!black},
    urlcolor={blue!80!black}
}

\usepackage{url}            
\usepackage{booktabs}       
\usepackage{amsfonts}       
\usepackage{nicefrac}       
\usepackage{microtype}      
\usepackage{xcolor}         

\usepackage{amsmath}
\usepackage[capitalize,noabbrev]{cleveref}

\usepackage{amssymb}
\usepackage{mathtools}
\usepackage{amsthm}
\usepackage{thm-restate}

\usepackage{bm}
\usepackage{xcolor, colortbl}
\usepackage{tabu}
\usepackage{bbm}
\usepackage{upgreek}
 \usepackage[bb=boondox]{mathalfa}
\usepackage{graphicx}
\usepackage{caption}
\usepackage{subcaption}
\usepackage{thm-restate}
\usepackage{selectp}
\usepackage[normalem]{ulem}
\usepackage{cancel}



\newcommand{\tikzcircle}[2][black,fill=black]{\tikz[baseline=-0.5ex]\draw[#1,radius=#2] (0,0) circle ;}%

\def\true{\texttt{1}}

\def\R{\mathbb{R}}

\def\1{\mathds{1}}

\def\H{\mathcal{H}}
\def\I{\mathcal{I}}
\def\X{\mathcal{X}}
\def\Y{\mathcal{Y}}

\def\K{\mathcal{K}}

\DeclareMathOperator*{\argmax}{arg\,max}

\DeclareMathOperator*{\E}{\mathbb{E}}

\definecolor{Gray}{gray}{0.85}

\theoremstyle{plain}
\newtheorem{theorem}{Theorem}[section]
\newtheorem{proposition}[theorem]{Proposition}

\theoremstyle{definition}
\newtheorem{definition}[theorem]{Definition}

\theoremstyle{remark}

\Crefname{equation}{Eq.}{Eqs.}

\title{Understanding Noise-Augmented Training for Randomized Smoothing}

\author{\name Ambar Pal \email \textsc{ambar@jhu.edu} \\
      \addr Department of Computer Science \&\\
      Mathematical Institute of Data Science \\
      Johns Hopkins University \\
      Baltimore, MD 21218, USA
      \AND
      \name Jeremias Sulam \email \textsc{jsulam1@jhu.edu} \\
      \addr Department of Biomedical Engineering \&\\
      Mathematical Institute of Data Science \\
      Johns Hopkins University \\
      Baltimore, MD 21218, USA
      }


\begin{document}
\maketitle

\begin{abstract}
Randomized smoothing is a technique for providing provable robustness guarantees against adversarial attacks while making minimal assumptions about a classifier. This method relies on taking a majority vote of any base classifier over multiple noise-perturbed inputs to obtain a smoothed classifier, and it remains the tool of choice to certify deep and complex neural network models. Nonetheless, non-trivial performance of such smoothed classifier crucially depends on the base model being trained on noise-augmented data, i.e., on a smoothed input distribution. While widely adopted in practice, it is still unclear how this noisy training of the base classifier precisely affects the risk of the robust smoothed classifier, leading to heuristics and tricks that are poorly understood. In this work we analyze these trade-offs theoretically in a binary classification setting, proving that these common observations are not universal. We show that, without making stronger distributional assumptions, no benefit can be expected from predictors trained with noise-augmentation, and we further characterize distributions where such benefit is obtained. Our analysis has direct implications to the practical deployment of randomized smoothing, and we illustrate some of these via experiments on CIFAR-10 and MNIST, as well as on synthetic datasets.
\end{abstract}

\section{Introduction} \label{sec:intro}
Machine learning classifiers are known to be vulnerable to adversarial attacks, wherein a small, human-imperceptible additive perturbation to the input is able to produce a change in the predicted class \citep{szegedy2013intriguing}. Because of clear security implications \citep{kurakin2016adversarial}, this phenomenon has sparked increasing amounts of work dedicated to devising defense strategies \citep{metzen2017detecting,gu2014towards,madry2017towards} and correspondingly more sophisticated attacks \citep{carlini2017adversarial,athalye2018obfuscated,tramer2020adaptive}, with each group trying to triumph over the other in an arms-race of sorts. As a result, an increasing number of works have begun providing theoretical guarantees and understanding of adversarial attacks and defenses, studying learning theoretic questions \citep{shafahi2018adversarial,cullina2018pac,bubeck2018adversarial,tsipras2018robustness}, understanding the sample complexity of robust learning \citep{schmidt2018adversarially,yin2018rademacher,tu2019theoretical,awasthi2019adversarially}, characterizing the optimality of attacks and defenses \citep{pal2020game}, analyzing the implications of robust representations \citep{awasthi2020adversarial_lowrank, sulam2020adversarial,muthukumar2022adversarial}, and more.

One of the central 
objects of study in this setting are robustness certificates, which provide provable guarantees on the prediction of models as long as the input is not perturbed beyond a specific contamination level \citep{cohen2019certified,raghunathan2018certified,yang2020randomized}. In this vein, randomized smoothing \citep{lecuyer2019certified,cohen2019certified} employs a \emph{base} classifier, typically vulnerable to adversarial attacks, and derives from it a \emph{smoothed} version by taking a majority vote of the base classifier's outputs on several (stochastic) noise-perturbed versions of the input. While making minimal assumptions about the base model, the resulting smoothed predictor is provably robust to input perturbations of bounded $\ell_p$ norm \citep{yang2020randomized}. Because this is applicable to \emph{any} given predictor, randomized smoothing has arguably become the most practical certified defense technique against adversarial examples for deep learning models, which are often too complex be analyzed and certify otherwise.

Nonetheless, the accuracy of the resulting smoothed model differs from that of its base classifier. It has been empirically observed that smoothing a base classifier \emph{out of the box}, i.e., without any modifications to the network weights or structure, does not provide good performance, for example in terms of certified accuracy \citep{cohen2019certified,gao2019convergence}. Tricks like noise-augmented training for the base classifier need to be employed in order to obtain meaningful results and defense certificates. This phenomenon, while pervasive in practice, does not have a good theoretical understanding. There are several open questions surrounding the relationship between the base classifier and its smoothed version: Why is noise-augmented training useful when training the base classifier? How does the performance of the smoothed classifier depend on the distribution of the training noise? Is noise perturbed training always needed for all data-distributions, and if not, could we determine this before modifying the training procedure?
In this paper, we take a step towards answering some of these questions for a general binary classification task in $\R^d$.

In summary, our contributions are as follows:
\begin{enumerate}
\item For a general bounded data-distribution, we derive an upper bound on the benign risk obtained from smoothing a classifier that has been trained with noise augmentation. Surprisingly, this bound suggests that noise-augmented training can in fact be harmful to the benign risk of the final smoothed classifier, which is contrary to the observed phenomenon in practical applications.
\item We then characterize a family of data distributions for which the above bound is tight, 
and for which training on noise-augmented data is only detrimental. These distributions are characterized by a notion of large \emph{interference distance}, which we formalize.
\item We show that this notion of interference distance captures some of the trade-offs in randomized smoothing, and we prove that there exist a family of distributions where benefits can be obtained from noise-augmented training -- as observed in practice. 
\item Our empirical experiments suggest, firstly, that real data distributions lie in the low-interference distance regime, and hence noise-augmented training is beneficial to randomized smoothing. Secondly, contrary to practice, 
our proofs suggest that the parameters of the noise-distribution for  noise-augmented training and that of randomized smoothing need {not} be the same, and that allowing for different values of these parameters lead to improved results. Our experiments on MNIST and CIFAR-10 confirm this theoretical intuition.
\end{enumerate}

The rest of the paper is organized as follows. In \cref{sec:prelims} we describe preliminaries and set up our notation and framework. In \cref{sec:main} we introduce our main results informally, obtaining conditions that characterize data distributions where noise-augmented training provides an advantage, and those where it does not. \cref{sec:details} presents our main results in greater technical detail. 
Lastly, we conduct our synthetic and real data experiments in \cref{sec:expts}, and conclude in \cref{sec:conclusion} highlighting our answers to the questions posed above. 

\section{Preliminaries and Setup} \label{sec:prelims}
We consider a binary classification \footnote{We comment on extensions to multiple classes later in \cref{sec:details}.} task on data $X \in \X \subset \R^d$ with labels $Y\in\Y=\{0,1\}$. The random variable $X$ follows a data distribution over the space $\X$ with PDF $p_X$. In turn, the label or response variable $Y$ follows a distribution $p_Y$ over its corresponding space. We assume that $\X$ is bounded (we take diameter $1$ for simplicity). A supervised learning task is defined by the joint distribution $p_{X, Y}$, and the goal is to obtain a predictor $h \colon \X \to [0, 1]$ so that $h$ is a good approximation for the conditional expectation $\mathbb E[Y | X = x]$. These distributions are unknown, however, so the learning problem consists of finding such a predictor from a set of samples, typically identically and independently distributed from $p_{X, Y}$.

In this work we will not dwell much on the learning problem, and instead we will assume we are given access to a function $h \colon \X \to [0, 1]$ which is a good approximation \footnote{Our results are developed for $h = E[Y | X]$, but we show that most of them can be extended to the approximate case in \cref{sec:bayeserrors}.} for the conditional $p_{Y|X}$.  The \emph{base} classifier is given by the composition of $h$ with a discretizing decision mapping, $\psi \colon [0,1] \to \mathcal Y$. In our setting, $\psi(z) = \mathbb{1}[z \geq 0.5]$, where $\mathbb{1}[\cdot]$ denotes the indicator function. Note that if $h$ is the real conditional distribution of the label for a given input, this coincides\footnote{Up to the deterministic nature of $\psi(z)$ whenever $z=0.5$.} with the Bayes classifier for this problem. 
%

Adversarial examples are ``small'' additive perturbations designed for an input sample such that the predicted label at the perturbed input is incorrect.
Typically these perturbations are constrained to be in some $\ell_q$ norm ball, i.e., $\|\delta\|_q\leq \epsilon$, so as to be imperceptible to humans. It has been extensively shown that models that achieve excellent accuracy in normal settings can misclassify samples perturbed even with very small values of $\epsilon$ \citep{goodfellow2014explaining}. The goal of certified robustness methods is to guarantee that the output of a certain model at a given input\footnote{For simplicity of notation in our analysis, we use $\psi(h)(x)$ to denote the composition $\psi\circ h$ at $x$.}, $\psi(h)(x)$, will not change when contaminated by perturbations in a $\ell_q$-ball with radius of at most $\epsilon^*$. That is, certifying that $\psi(h)(x) = \psi(h)(x+\delta)$ for every $\delta : \|\delta\|_q\leq\epsilon^*$. 

Randomized smoothing \citep{cohen2019certified} achieves this by ``smoothing'' the classifier $\psi(h)$ with an isotropic distribution. More precisely, the smoothed classifier ${\rm Smooth}(\psi(h))$ is constructed from $\psi(h)$ by
\begin{equation}
{\rm Smooth}_{p_V}(\psi(h))(x) \overset{\rm def}{=}  \underset{c\in\Y}{\arg\max}~ \mathbb P[ \psi(h)(x+V)=c],
\end{equation}
where $V \sim p_V$. The choice of the distribution $p_V$ centrally depends on the norm constraint of the adversarial perturbation. Randomized smoothing was first introduced as a certification method against $\ell_2$-bounded perturbations, for which $p_V = \mathcal{N}(0,\beta^2 I_d)$. However, this has been extended to other norms in a series of recent works \citep{lecuyer2019certified,li2019certified,yang2020randomized}, yielding correspondingly different distributions $p_V$. In particular, for the binary case with $\ell_2$-bounded perturbations, randomized smoothing measures the probability of the predicted class (say $1$) after smoothing as $s = \mathbb P(h(x+V) = 1)$. Then, it provides a certified radius $\epsilon^* = \beta \Phi^{-1}(s)$ that depends on this probability $s \in (0.5, 1]$ as well as the variance of the smoothing distribution $\beta^2$, where $\Phi^{-1}$ is the inverse of the standard Gaussian CDF. 
Before continuing, it will be useful for our discussions to employ the following equivalent form of randomized smoothing.
\begin{proposition}
When $p_V$ is symmetric, and for the binary classification task defined above, the smoothed classifier is given by 
${\rm Smooth}_{p_V} (\psi(h))(x) = \psi((\psi(h) * p_V))(x)$, where $*$ denotes the convolution operation.
\end{proposition}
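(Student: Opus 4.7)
The plan is a direct computation unpacking both sides of the claimed identity and showing they agree pointwise. Since $\Y=\{0,1\}$ and the argmax has only two competitors, the left-hand side can be rewritten as a thresholding of a single probability: ${\rm Smooth}_{p_V}(\psi(h))(x) = 1$ precisely when $\mathbb{P}[\psi(h)(x+V)=1] \geq \mathbb{P}[\psi(h)(x+V)=0]$, i.e. when $\mathbb{P}[\psi(h)(x+V)=1]\geq 0.5$. So my first step is to write
\[
{\rm Smooth}_{p_V}(\psi(h))(x) \;=\; \mathbb{1}\!\left[\mathbb{P}[\psi(h)(x+V)=1]\geq 0.5\right] \;=\; \psi\!\left(\mathbb{P}[\psi(h)(x+V)=1]\right),
\]
using the definition $\psi(z)=\mathbb{1}[z\geq 0.5]$ from the preliminaries (and the same tie-breaking convention implicit in the argmax).

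Next I would rewrite that probability as an expectation, since $\psi(h)$ is $\{0,1\}$-valued:
\[
\mathbb{P}[\psi(h)(x+V)=1] \;=\; \mathbb{E}_{V\sim p_V}[\psi(h)(x+V)] \;=\; \int \psi(h)(x+v)\, p_V(v)\, dv.
\]
The remaining step is to identify this integral with the convolution $(\psi(h)*p_V)(x)$. Writing convolution as $(\psi(h)*p_V)(x) = \int \psi(h)(x-u)\, p_V(u)\, du$ and applying the change of variable $u=-v$ together with the symmetry assumption $p_V(-v)=p_V(v)$ yields
\[
(\psi(h)*p_V)(x) \;=\; \int \psi(h)(x+v)\, p_V(-v)\, dv \;=\; \int \psi(h)(x+v)\, p_V(v)\, dv.
\]
Combining these two displays gives $\mathbb{P}[\psi(h)(x+V)=1] = (\psi(h)*p_V)(x)$, and substituting back into the first display yields the claimed equality ${\rm Smooth}_{p_V}(\psi(h))(x) = \psi((\psi(h)*p_V))(x)$.

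There is no serious obstacle here; the only subtle point is the handling of the boundary case where $\mathbb{P}[\psi(h)(x+V)=1]$ equals exactly $0.5$, which I would dispense with by noting that the argmax in the definition of ${\rm Smooth}$ and the indicator $\psi(z)=\mathbb{1}[z\geq 0.5]$ must be interpreted with a consistent tie-breaking rule (assigning class $1$ at ties), so that both sides agree on this measure-zero event as well.
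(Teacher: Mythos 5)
Your proof is correct and follows the same direct computation as the paper: rewrite the argmax as a threshold on the probability of class $1$, convert that probability to an integral against $p_V$, and identify the integral with the convolution. The only (minor) difference is that you are more explicit about the final step, performing the change of variables and invoking the symmetry $p_V(-v)=p_V(v)$ to match the convolution integrand, whereas the paper states $\int \psi(h)(x+v)\,p_V(v)\,dv = (\psi(h)*p_V)(x)$ without comment---your version makes clear exactly where the symmetry hypothesis is used.
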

\proof
Consider the random variable $V \sim p_V$ and simplify ${\rm Smooth}_{p_V}(\psi(h))(x)$ as,
\begin{align}\nonumber
{\rm Smooth}_{p_V} (\psi(h))(x) 
&= \argmax_{c \in \{0, 1\}} \mathbb P[\psi(h)(x + V) = c]  \nonumber \\
&= \psi (\mathbb P[\psi(h)(x + V) = 1]) = \psi\left( \int \mathbb{1} [\psi(h)(x + v) = 1] p_V(v) dv \right) \nonumber \\
&= \psi\left(\int \mathbb \psi(h)(x + v) p_V(v) dv \right) = \psi(\psi(h)*p_V)(x) \nonumber. \qquad \qedsymbol
\end{align}
In this manuscript, we will work with single-parameter smoothing distributions $p_\beta$. Accordingly, we will simplify our notation to denote the smoothed classifier as ${\rm Smooth}_\beta(\psi(h))(x)$. 


We consider the setting where a certain minimum level of robustness is required at deployment, say $\epsilon^*$. In other words, we want our certified radius to be at least $\epsilon^*$. From the previous discussion, this implies that $\beta \Phi^{-1}(s) \geq \epsilon^*$, implying that we need a minimum strength for randomized smoothing, say $\beta^* \overset{{\rm def}}{=}(1 / \Phi^{-1}(s)) \epsilon^*$. Then, given that one will be required to employ ${\rm Smooth}_{\beta^*}(\psi(h))(x)$, how should $h$ be obtained? 

To make the above question precise, we recall the definition of the risk of a classifier $f$ under the data-distribution $p_{X, Y}$ as $R(f) = \mathbb P(f(X) \neq Y)$. When $f = {\rm Smooth}_{\beta^*}(\psi(h))$, this becomes
\begin{equation*}
    R({\rm Smooth}_{\beta^*}(\psi(h))) = \mathbb P({\rm Smooth}_{\beta^*}(\psi(h))(X) \neq Y).
\end{equation*}

As mentioned in the introduction, obtaining $h$ via natural training typically leads to a certifiable predictor with higher error than that of the base classifier, i.e., $R({\rm Smooth}_{\beta^*}(\psi(h))) > R(\psi(h))$. This is expected, as the classifier $\psi(h)$ was precisely trained to minimize the risk $R(\psi(h))$, whereas its smoothed counterpart ${\rm Smooth}_\beta (\psi(h))$, was not. Such a phenomenon can also be regarded as an out-of-distribution (OOD) problem, albeit in a very specific setting where the distribution shift is produced by a certification method. Indeed, it has been widely demonstrated in practice that naïvely smoothing any base classifier leads to a significant degradation in benign accuracy, i.e., accuracy on non-adversarially-corrupted samples, and hence to a lower certified-accuracy. Besides the well established empirical evidence reported in the literature \citep{gao2020analyzing,cohen2019certified}, we will also showcase this phenomenon in our experiments.

To alleviate the discrepancy, practitioners have resorted to retraining the base classifier $\psi(h)$ with a \emph{noise-augmented} data distribution instead, denoted here by $p^s_X$. This is achieved by 
adding noise $V \sim p_V$ to the data variable $X \sim p_X$ to obtain the noise-augmented data $X_s = X + V \sim p^s_X$. Following general intuition, in practice one employs the same smoothing distribution for noise-augmentation as that employed for the certification stage.
If we let $p_X^s$ be the PDF of $X_s$, it is well known that $p_X^s$ is the convolution of $p_X$ and $p_V$, i.e. $p_X^s = p_X\ast p_V$.

Throughout this work, we will make minimal assumptions about the underlying data distribution, the parameterization of the predictor, and the (finite) dimension of the data. However, we will assume that the hypothesis class is rich enough, and the learning algorithm good enough, such that the true conditional expectations are learnt successfully. As we show in Appendix \ref{app:bayes_classifier}, this implies that the Bayes classifiers $\psi(h)$ and $\psi(h * p_V)$ are learnt successfully under data distributions $P_X$ and $P^s_X$, respectively. Therefore, a classifier learnt on the noise-augmented data results in ${\rm Smooth}_\beta(\psi(h * p_V))$, which will be the central object of our study. 

While the assumption of learning the Bayes classifiers exactly might seem stringent, much of our results can be adapted to relax this assumption while maintaining our general proof technique, assuming a controlled difference between the Bayes and the obtained classifier. In \cref{sec:details} and \cref{sec:bayeserrors}, we discuss this further and extend our results to handle errors in learning. On the one hand, our analyses of the Bayes classifiers are informative because they reflect the best possible predictors that can be learned from data. On the other hand, and importantly, we will illustrate how these assumptions are reasonable and sufficient to depict what is observed in relevant scenarios. Indeed, our theoretical results and simulations on synthetic experiments will resemble the observations in natural image data, presented in \cref{sec:expts}.

As earlier, we will employ noise-augmentation distributions parameterized by a single parameter. Accordingly, we will denote our noise-augmentation distribution as $p_\alpha$ to highlight the only parameter, $\alpha$. For instance, when augmenting data with a Gaussian we have that $p_V = p_\alpha = \mathcal{N}(0,\alpha^2I)$; and in the case of a uniform distribution supported over a $\ell_2$ ball of radius $\alpha$, one has $p_V = p_\alpha = {\rm Unif}(B_{2}(0, \alpha))$. In this way, the final smoothed classifier is given by ${\rm Smooth}_\beta(\psi(h * p_\alpha))$. A key goal of our work is to understand the interplay between these two parameters: $\beta$ (controlling the certified radius) and $\alpha$ (controlling the noise-augmentation of the data distribution).

\section{Main Results} \label{sec:main}
Our central aim in this section is to theoretically characterize the empirically observed difference between the benign risks of the original classifier $\psi(h)$, and that of the smoothed classifier trained on noise-augmented data, ${\rm Smooth}_\beta(\psi(h * p_\alpha))$. In other words, we want to obtain an estimate of the excess risk
\begin{equation}
\Delta_{\alpha,\beta}(h) = R({\rm Smooth}_\beta(\psi(h * p_\alpha))) - R(\psi(h)),
\label{eq:riskdiff}
\end{equation}
where the risk measures the probability of error over $p_{X,Y}$ (see \cref{sec:intro}). We will present our result conceptually and slightly informally here, and we will provide a more detailed version of these later in \cref{sec:details}.


In our first result, we demonstrate that the behaviour observed in practice (namely, that performing noise-augmented training is beneficial\footnote{Note that we are \emph{not} studying the robust risk of the smoothed classifier. Nevertheless, understanding properties of the benign risk is an essential step towards understanding the robust risk, as the former is a lower bound on the latter.}) is not universal. We do this by showing that there exists a class of distributions $\mathcal H_1$ with a certain minimal \emph{interference distance} property where training the classifier on a noise-augmented distribution before performing the smoothing for certification is in fact detrimental.
The interference distance $\zeta_h$ for a classifier $\psi(h)$ can be informally thought of 
as the average $\ell_2$ distance between any two disjoint input regions classified as 1 by $\psi(h)$ (normalized by the size of the domain). \cref{fig:separation} provides a simple illustration, where $\zeta_h$ is the average length of all the black arrows. We defer the formal definition of this property to \cref{sec:details}, where we will use tighter characterizations by using the minimum and maximum of such lengths.
\begin{theorem}[No need for noise-augmentation]
There exists a class of distributions $\mathcal{H}_1$ with large interference distance, such that, for all $h \in \mathcal{H}_1$, training on a smoothed distribution has no benefit; that is $\Delta_{0, \beta}(h) < \Delta_{\alpha, \beta}(h)$ for all $\alpha, \beta > 0$. 
\label{lem:alpha0best}
\end{theorem}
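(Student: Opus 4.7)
The plan is to leverage the general upper bound on $R({\rm Smooth}_\beta(\psi(h * p_\alpha)))$ promised by the paper's first contribution, which we expect to take the form $R(\psi(h)) + U(\alpha, \beta, h)$ for some nonnegative quantity $U$ that is monotonically increasing in $\alpha$. Granting such a bound, the theorem reduces to exhibiting a family $\mathcal{H}_1$ of distributions on which the inequality is \emph{attained} (so that $\Delta_{\alpha,\beta}(h) = U(\alpha,\beta,h)$), together with the verification that the monotonicity is strict at every $\alpha > 0$, from which $\Delta_{0,\beta}(h) = U(0,\beta,h) < U(\alpha,\beta,h) = \Delta_{\alpha,\beta}(h)$ immediately follows.

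For the construction of $\mathcal{H}_1$, I would consider distributions whose Bayes class-$1$ region $C_1 = \{x : \psi(h)(x) = 1\}$ decomposes into a finite union of compact components---for concreteness, a union of $\ell_2$-balls $\bigcup_i B(c_i, r)$---whose pairwise distances $\min_{i \neq j} \|c_i - c_j\|$ are much larger than the combined noise scales $\alpha + \beta$, formalizing the large-interference-distance condition. Using the convolutional form of the smoothed classifier from the proposition in the preliminaries, ${\rm Smooth}_\beta(\psi(h * p_\alpha))(x) = \psi((\psi(h * p_\alpha)) * p_\beta)(x)$, the component separation localizes the computation: near one component, the value of $h * p_\alpha$ is, up to terms that decay with the interference distance, determined by that component alone, and the same holds for the subsequent $p_\beta$-convolution. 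Cross-component contributions could only act to \emph{reduce} the excess risk (for instance, by causing a point in a neighbor's basin to be pulled back toward the correct label), so driving these terms to zero via the interference-distance assumption is precisely what makes the upper bound tight. Within a single component, convolution with $p_\alpha$ followed by thresholding at $0.5$ produces an eroded class-$1$ region whose boundary is shifted strictly inward (for balls with $r$ commensurate with $\alpha$), and the subsequent $\beta$-smoothing further displaces that boundary strictly beyond what $\beta$-smoothing applied to $\psi(h)$ alone would produce. Summing the per-component mismatch against $p_{X,Y}$ should recover $U(\alpha, \beta, h)$ exactly, and strict monotonicity in $\alpha$ then follows by checking that the $0.5$-level set of $h * p_\alpha$ moves inward monotonically as $\alpha$ grows.

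The main obstacles I anticipate are twofold. First, making the localization step quantitatively rigorous: one must explicitly bound the cross-component tails of $h * p_\alpha$ and of $\psi(h * p_\alpha) * p_\beta$ and show that they are dominated by the main local term, or otherwise absorb them into the defining conditions of $\mathcal{H}_1$. Second, ensuring \emph{strict} inequality for \emph{every} $\alpha > 0$---even arbitrarily small---which requires $p_X$ to place positive mass on a neighborhood of each component boundary, so that any nonzero inward shift of the decision region translates into a strictly positive excess risk. Both issues should be handled by tailoring $\mathcal{H}_1$ so that the components sit at a scale commensurate with the smoothing bandwidth and the support of $p_X$ meaningfully covers the boundary regions of each component.
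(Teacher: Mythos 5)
Your proposal matches the paper's proof in all essentials: the paper also takes $\mathcal{H}_1$ to be conditionals whose positive region is a union of $\ell_2$-balls separated by more than $\max(\alpha,\beta)$, uses compactly supported uniform noise so that the convolution localizes \emph{exactly} to each ball (making your anticipated tail-bounding step unnecessary --- this is the ``absorb into the defining conditions of $\mathcal{H}_1$'' option you mention), computes the resulting $\alpha$- then $\beta$-shrinkage of each ball in closed form via the radial CDF, and concludes strict monotonicity from $p_X$ being positive everywhere so that the strictly smaller shrunk region $\I_{\alpha,\beta}\subsetneq\I_{0,\beta}$ yields strictly larger risk. The only (cosmetic) difference is direction: you propose deriving the result as the tight case of the general upper bound, whereas the paper computes the risk exactly for this spherical family first and obtains the general bound of \cref{th:main_full} afterwards by relaxing that same argument.
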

\begin{figure}
    \centering
    \includegraphics[width=0.8\textwidth]{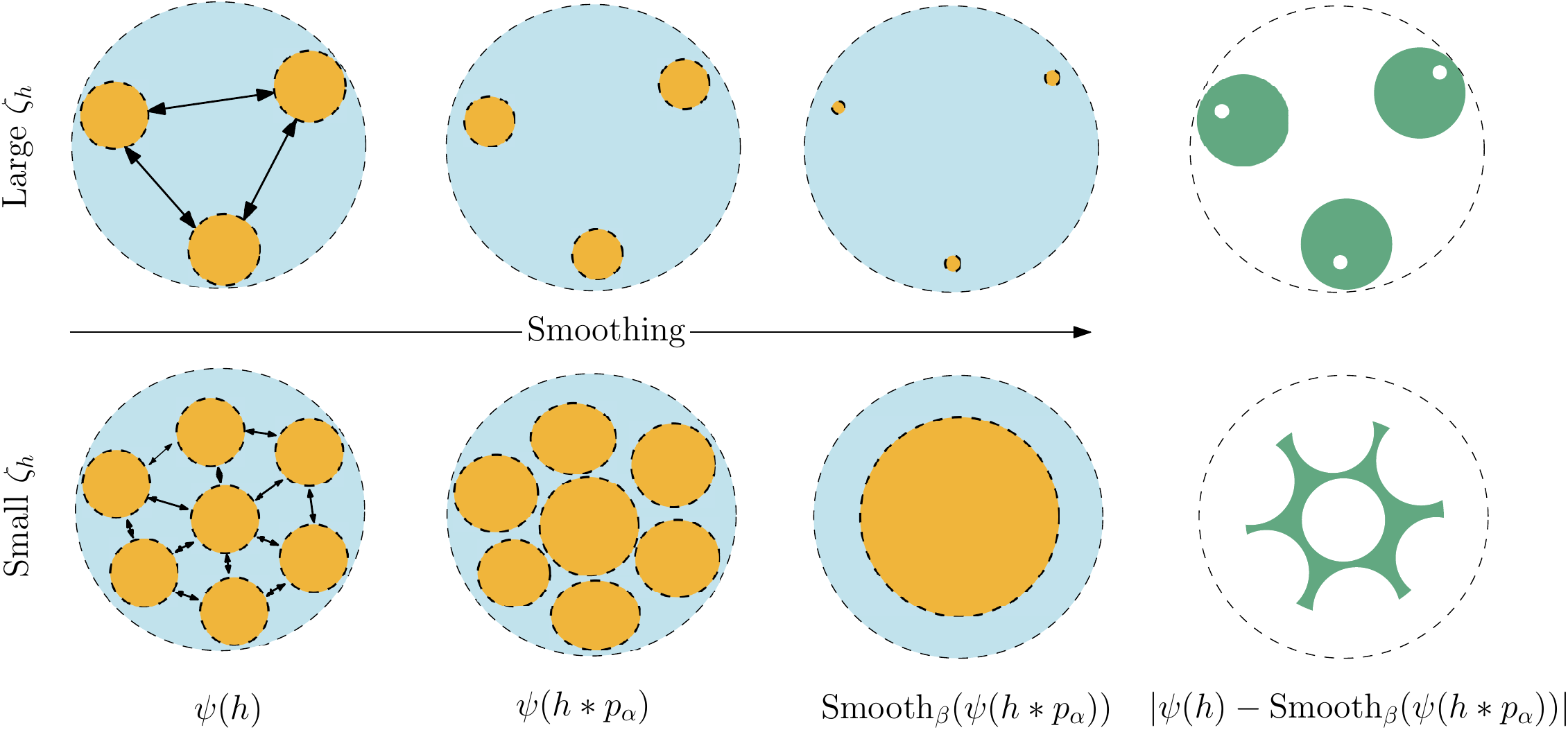}
    \caption{\cref{lem:alpha0best} and \cref{lem:alphapositivebest} deal with the cases of large (top row) and small (bottom row) interference distance $\zeta_h$, respectively, where $\zeta_h$ is the average of the pairwise distances between the orange regions, denoted by the black arrows. The text after \cref{lem:alphapositivebest} provides a detailed description of the different regions in the figure.}
    \label{fig:separation}
\end{figure}
%
This result is of importance because it sheds light on the kind of behaviour that can be expected while making minimal assumptions of the (conditional) distribution $h$. Indeed, we now present a result that upper bounds $\Delta_{\alpha,\beta}(h)$ as a function of the smoothing parameters, $\alpha, \beta$.
%
\begin{theorem}[Simplified Upper-Bound] \label{th:main_inc}
For any $h$ with bounded support, $\Delta_{\alpha, \beta}(h) \leq G_{\alpha, \beta}$, 
where $G_{\alpha, \beta}$ is a monotonically increasing function of both $\alpha$ and $\beta$.
\end{theorem}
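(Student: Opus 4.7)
My plan is to reduce the excess risk to an expected $L^1$ distance between $h$ and the soft prediction $g_{\alpha,\beta} := \psi(h * p_\alpha) * p_\beta$ of the smoothed-trained classifier (using the proposition at the end of \cref{sec:prelims}), then decompose this distance via a triangle inequality and bound each piece by a manifestly monotonic function of $\alpha$ and $\beta$.

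First, since $h = \mathbb{E}[Y \mid X]$ under our assumptions, the predictor $\psi(h)$ is Bayes-optimal, so the standard excess-risk identity $R(\psi(f)) - R(\psi(h)) = \mathbb{E}[|2h(X) - 1|\,\mathbb{1}[\psi(f)(X) \neq \psi(h)(X)]]$ applies with $f = g_{\alpha,\beta}$. On the disagreement event, $h(X)$ and $g_{\alpha,\beta}(X)$ lie on opposite sides of $1/2$, so $|g_{\alpha,\beta} - h| \geq |h - 1/2| = |2h - 1|/2$, which yields the margin-free bound $\Delta_{\alpha,\beta}(h) \leq 2\,\mathbb{E}[|h(X) - g_{\alpha,\beta}(X)|]$.

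Next, I would insert $h * p_\alpha * p_\beta$ as an intermediate and push the absolute value through the nonnegative kernel $p_\beta$, producing
\[
|h - g_{\alpha,\beta}| \;\leq\; |h - h * p_\alpha * p_\beta| \;+\; \bigl(|h * p_\alpha - \psi(h * p_\alpha)|\bigr) * p_\beta.
\]
Taking $p_X$-expectations gives a candidate $G_{\alpha,\beta} = 2\bigl(E_1(\alpha,\beta) + E_2(\alpha,\beta)\bigr)$, where $E_1 = \mathbb{E}_{X,W}|h(X) - h(X + W)|$ with $W \sim p_\alpha * p_\beta$ is the smoothing error of $h$ under the combined kernel, and $E_2$ is the $p_X$-expectation of the $p_\beta$-smoothed binarization gap $|h * p_\alpha - \psi(h * p_\alpha)|$.

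Finally, monotonicity of $G_{\alpha,\beta}$ in both parameters would rely on the scale/semigroup structure of the kernel families considered in the paper ($\mathcal{N}(0, \alpha^2 I)$ with $p_\alpha * p_\beta = \mathcal{N}(0, (\alpha^2 + \beta^2) I)$, or uniform $\ell_2$-balls with Minkowski-summed radii). For $E_1$, I would couple $W$ at parameters $(\alpha_1, \beta_1) \leq (\alpha_2, \beta_2)$ as $W_{\mathrm{large}} = W_{\mathrm{small}} + W'$ with $W'$ an independent increment corresponding to the enlarged scale, and apply the triangle inequality across the extra step to conclude that $E_1$ is monotonic in each parameter. For $E_2$, the same coupling on $p_\beta$ handles the $\beta$ dependence, while the $\alpha$ dependence uses a semigroup decomposition $h * p_{\alpha_2} = (h * p_{\alpha_1}) * p_{\alpha'}$. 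The main obstacle is this last step, since changing $\alpha$ simultaneously alters $h * p_\alpha$ and its binarizer $\psi(h * p_\alpha)$; I expect the cleanest treatment is to bound the binarization gap of a further-smoothed binary function in terms of an extra smoothing error applied to $h * p_{\alpha_1}$ itself, reducing the analysis to a single-scale monotonicity argument of the same type as for $E_1$.
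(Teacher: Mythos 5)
Your proposal takes a genuinely different route from the paper. The paper (in \cref{th:main_full} and its proof in \cref{sec:proofmainfull}) reasons geometrically: it partitions $\X$ into connected positive and negative regions, shows via a chain of sufficient conditions ($C_1,\ldots,C_5$ and $D_1,\ldots,D_5$) that each region shrinks by an explicit radius $r^k_{\alpha,\beta} = r^k_\alpha + \sqrt{\Psi_\beta^{-1}(0.5)}$ under the composition of noise-training and smoothing, and bounds the excess risk by one minus the $p_X$-mass of surviving inscribed balls. Monotonicity of the bound in $\alpha$ and $\beta$ is then manifest, since $\Psi_\theta^{-1}(c)$ grows with $\theta$ for any fixed $c$. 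Your approach instead reduces to an $L^1$ comparison $\Delta_{\alpha,\beta}(h) \le 2\,\E|h - g_{\alpha,\beta}|$ and decomposes via triangle inequality; this reduction and the decomposition $|h - g_{\alpha,\beta}| \le |h - h*p_\alpha*p_\beta| + |h*p_\alpha - \psi(h*p_\alpha)|*p_\beta$ are correct.

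The gap is in the final monotonicity step, and it is fatal rather than merely a technical obstacle. Your coupling $W_{\mathrm{large}} = W_{\mathrm{small}} + W'$ followed by the triangle inequality yields
\begin{equation*}
E_1(\alpha_2,\beta_2) \;\le\; E_1(\alpha_1,\beta_1) + \E\bigl|h(X - W_{\mathrm{small}}) - h(X - W_{\mathrm{large}})\bigr|,
\end{equation*}
which upper-bounds the \emph{increase} in $E_1$ but says nothing about its sign — you need $E_1(\alpha_2,\beta_2) \ge E_1(\alpha_1,\beta_1)$, which is the opposite direction. Reverse triangle inequality also does not produce the needed sign because of the absolute values outside the expectation. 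Moreover, the semigroup structure you invoke for the uniform case fails: if $p_\theta = {\rm Unif}(B_{\ell_2}(0,\theta))$, then $p_\alpha * p_\beta$ is \emph{not} a uniform distribution on a ball (Minkowski summation applies to sets, not to the law of a sum of two independent uniform draws), so the coupling does not even exist in the form you describe — and uniform kernels on $\ell_2$ balls are precisely the ``nice'' distributions used in the paper's \cref{thm:Thm4.1}. You flag the $E_2$ binarization-gap term as ``the main obstacle,'' but $E_1$ already breaks for the same reason. The paper's geometric argument completely sidesteps the need to prove monotonicity of any $L^1$ smoothing gap: it constructs an upper bound whose only $(\alpha,\beta)$-dependence is through the explicitly increasing shrinkage radii $r^k_{\alpha,\beta}$, with no coupling or semigroup structure required.
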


We pause to make a few remarks about \cref{th:main_inc}. First, we see that $G_{\alpha, \beta}$ increases with the certification parameter $\beta$, suggesting that the excess risk $\Delta_{\alpha, \beta}$ may increase as we increase $\beta$. This behavior is expected and reflects what is seen in practice. Recall that the inverse of the smoothness parameter of the classifier ${\rm Smooth}_\beta(\psi(h * p_\alpha))$ is proportional to the randomized smoothing strength, $\beta$. Hence as $\beta$ increases, the classifier changes more slowly, and more error is incurred in the \emph{spiky} regions of the domain where the true class fluctuates rapidly. This demonstrates the well known accuracy-robustness trade-off: larger $\beta$ leads to better robustness certificates but worse benign performance \citep{gao2020analyzing,cohen2019certified}.

Secondly, we see that $G_{\alpha, \beta}$ increases monotonically with $\alpha$. This suggests that, for any fixed value of $\beta$, the excess risk $\Delta_{\alpha, \beta}$ is minimized at $\alpha = 0$; i.e, it is better not to perform noise-augmentation during training. This is contrary to the behavior of randomized smoothing in practice: to obtain a smoothed classifier ${\rm Smooth}_\beta(\psi(h * p_\alpha))$ with a good benign risk a practitioner typically sets $\alpha = \beta > 0$. 
One might be tempted to conclude that this seeming contradiction stems from the fact that the bound in \cref{th:main_inc} is too loose to be informative. Yet, we find that this is not the case. In fact, \cref{th:main_inc} reflects the behaviour of $\Delta_{\alpha,\beta}(h)$ \emph{for a general} $h$ with bounded support, and this upper bound is tight for functions $h \in \mathcal{H}_1$, i.e., the family of distributions from \cref{lem:alpha0best}.

If retraining on noise-augmented data is not universally needed, then for what family of distributions is it beneficial? It turns out that our notion of interference distance $\zeta_h$ controls this trade-off, as we now show.
\begin{theorem}[Noise-Augmentation helps for distributions with small interference distance] \label{lem:alphapositivebest}
There exists a class of distributions $\mathcal{H}_2$ with small interference distance, such that, for all $h \in \mathcal{H}_2$, training on noise-augmented data is favorable; i.e., there exists $\alpha, \beta > 0$ such that ${\Delta_{\alpha, \beta}}(h) < {\Delta_{0, \beta}}(h)$.
\end{theorem}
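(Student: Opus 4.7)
The plan is to prove the theorem constructively, by exhibiting an explicit family $\mathcal{H}_2$ in which the positive region of $\psi(h)$ consists of many small, tightly clustered ``blobs'' whose pairwise distances are small compared to the ambient scale. The goal is to engineer the geometry so that the blobs are individually too small to survive direct smoothing by $p_\beta$, but collectively cover a region dense enough that convolving $h$ with $p_\alpha$ first merges them into a single connected mass of conditional probability above $1/2$.

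Concretely, I would take $h$ to be the conditional $\mathbb{E}[Y \mid X]$ supported on the union of $k$ disjoint small balls $I_1, \ldots, I_k$ inside a bounded region $R \subset \X$, with $h \equiv 1$ on $\cup_j I_j$ and $h \equiv 0$ elsewhere. Choosing the blob radius, their number, and their spacing carefully, I want two things to hold simultaneously: (i) for every $x$, $(\psi(h) \ast p_\beta)(x) < 1/2$, so that ${\rm Smooth}_\beta(\psi(h))(x) \equiv 0$ and consequently $R({\rm Smooth}_\beta(\psi(h))) = \mathbb{P}(Y=1)$; and (ii) for suitable $\alpha > 0$ comparable to the inter-blob spacing, the convolution $(h \ast p_\alpha)(x)$ exceeds $1/2$ on a connected region $R' \supseteq \cup_j I_j$, so that $\psi(h \ast p_\alpha) = \mathbb{1}_{R'}$, and the subsequent convolution $\mathbb{1}_{R'} \ast p_\beta$ still exceeds $1/2$ on an interior region $R'' \supseteq \cup_j I_j$.

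Once these two inequalities are in place, the risk of the noise-augmented smoothed classifier is bounded by the measure (under $p_{X,Y}$) of the symmetric difference between $R''$ and $\{Y = 1\}$, which is strictly less than $\mathbb{P}(Y=1)$. Combined with $R(\psi(h)) = 0$ (Bayes optimal on its support), this yields
\begin{equation*}
\Delta_{\alpha,\beta}(h) = R({\rm Smooth}_\beta(\psi(h \ast p_\alpha))) < \mathbb{P}(Y=1) = R({\rm Smooth}_\beta(\psi(h))) = \Delta_{0,\beta}(h),
\end{equation*}
as claimed. I would then verify that, by construction, the average pairwise $\ell_2$ distance between the $I_j$ is small, placing $h$ in the low interference distance class $\mathcal{H}_2$ per the formal definition deferred to \cref{sec:details}, giving a nontrivial dual to the $\mathcal{H}_1$ family of \cref{lem:alpha0best}.

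The main technical obstacle is the simultaneous calibration of blob radius, inter-blob spacing, and the noise scales $\alpha$ and $\beta$ so that both convolutional thresholds fire in the right direction. Quantitatively, $p_\beta$ must deposit strictly less than half its mass on any single blob (plus its spatially separated neighbors), while $p_\alpha$ must deposit more than half its mass on the union of all blobs as seen from any gap point. This amounts to a constraint relating the mass $p_\alpha$ places on a neighborhood of radius equal to the cluster diameter versus the mass $p_\beta$ places on a single blob, and verifying that a feasible regime exists -- essentially, that the interference distance being below a threshold depending on $\alpha$ and $\beta$ is sufficient -- is where the bulk of the calculation lies. I expect this to reduce to an elementary estimate on Gaussian (or uniform) tail mass once the geometry is parameterized, but the bookkeeping between the $\alpha$-scale and $\beta$-scale smoothing effects is the delicate part.
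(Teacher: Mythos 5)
Your proposal is correct and follows essentially the same route as the paper's proof: engineer closely spaced positive regions so that $\beta$-smoothing alone annihilates them (forcing ${\rm Smooth}_\beta(\psi(h))\equiv 0$ and risk $\mathbb{P}(Y=1)>1/2$), while $\alpha$-augmentation first merges them into a connected super-level set that then survives $\beta$-smoothing. The paper instantiates this with a one-dimensional two-interval example and an explicit calibration ($\omega=0.23$, $\alpha=0.1$, $\beta=0.93$), which is exactly the feasibility check you correctly identify as the remaining bookkeeping.
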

Before delving into 
our detailed results in \cref{sec:details}, we summarize a few key implications of \cref{lem:alpha0best} and \cref{lem:alphapositivebest} for randomized smoothing, as illustrated in the example in \cref{fig:separation}. There, a classifier $\psi(h)$ predicts the class $1$ in the orange regions of the input space $\X$, and class $0$ otherwise. 
In the \emph{large} separation regime that is captured by the family of distributions in $\H_1$, the orange regions are far apart and have a very small effect on each other (this is seen pictorially as each positive regions shrinks uniformly upon smoothing -- as if there were no other positive regions). In these cases, training on noisy data leads to each of the orange regions to shrink 
in the final smoothed classifier ${\rm Smooth}_\beta(\psi(h * p_\alpha))$ for any $\alpha > 0$, and the smoothed classifier with noisy training has larger benign risk than training with no noise (\cref{fig:separation} top row).
On the other hand, as the separation parameter decreases  the orange regions get closer to each other eventually having a significant effect on their neighbors after smoothing (\cref{fig:separation} bottom row). In this regime, it becomes possible to set $\alpha>0$ to achieve better benign risk after smoothing compared to smoothing a classifier not trained with noisy data (i.e. with $\alpha=0$). Such distributions are captured in $\H_2$. For these distributions, the result in \cref{th:main_inc} is in fact not tight, and thus benefit can be derived from setting $\alpha>0$.

With these informal results in mind, we now present them with greater rigour in the next section, before moving to the numerical experiments later in \cref{sec:expts}.
\section{Detailed Results} \label{sec:details}
In this section we will expand and make our results from \cref{sec:main} more precise. We will firstly prove in \cref{subsec:separation_large} the phenomenon observed in the \emph{large separation} regime illustrated in the top panel of \cref{fig:separation}, and demonstrated in our experiments (\cref{fig:spheres}). Next, in \cref{subsec:separation_small}, we will construct data distributions  that have a specific structure, following the phenomenon observed in the \emph{small separation} regime, illustrated in the bottom panel of \cref{fig:separation}.

Given a conditional distribution $h$, we define a partition of the input space $\X$ as $\X = \I^{-} \cup \I^{+}$, where $\I^{+}$ contains all the points where the classifier outputs $1$, i.e., $\I^{+} = \{x \in \X \colon \psi(h)(x) = 1\}$. Further, we think of $\I^{+}$ as being composed of disjoint \emph{positive} partitions $I_1, I_2, \ldots, I_M$ such that each  partition is simply connected \footnote{A subset of the space $\X$ is defined in the standard topological sense to be simply connected, if it is a connected region containing no \emph{holes}, i.e., every curve can be continuously contracted to a point.}. $\I^{-}$ is the rest of the space, i.e., $\X \setminus \I^{+}$, and is partitioned analogously. In other words, 
\begin{equation*}
    \I^{+} = \I_1 \sqcup \I_2 \sqcup \ldots \sqcup I_M \text{ and } \I^{-} = \X \setminus \I^{+},
\end{equation*}
where $\sqcup$ denotes the disjoint union. Note that how to exactly obtain a valid partition above is unspecified -- we construct the partitions explicitly for \cref{lem:unifalpha0bestexpanded,lem:alphapositivebest_detailed}, and \cref{th:main_full} holds for any valid partitioning. In \cref{fig:separation}, these regions are colored orange. When a classifier is trained on data perturbed by noise $p_\alpha$, and then smoothed by noise $p_\beta$, each of these positive partitions change in some way. Denote these latter partitions $I_1(\alpha, \beta), \ldots, I_M(\alpha, \beta)$. In the following two subsections, we will show how the difference between $I_k$ and $I_k(\alpha, \beta)$ changes under different separation regimes, and how this in turn determines the behavior of $\Delta_{\alpha, \beta}$.


\subsection{Large Separation Regime} \label{subsec:separation_large}
We will first formalize and prove a version of \cref{lem:alpha0best} by constructing suitable distributions $\H_1$ with a large interference distance. We will then generalize the proof strategy to arbitrary distributions to obtain \cref{th:main_inc}. 

Unless specified otherwise, we assume that the noise distributions (i.e., those for smoothing) $p_\theta(x)$ we are working with are \emph{nice}, in the following specific sense. Nice probability distributions are decreasing in their argument $x$, and are spherically symmetric. Formally, a probability distribution $p$ over $\R^d$ parameterized by a scalar $\alpha \in \mathbb R$ is defined to be \emph{nice}, if for all $x_1, x_2 \in \R^d$, $p$ satisfies the properties 
\begin{align*}
\text{(Decreasing in argument norm)} \qquad p_\alpha(x_1) &\geq p_\alpha(x_2), \quad \text{if } \|x_1\|_2 \leq \|x_2\|_2, \text{ and, } \\
\text{(Spherically symmetric)} \qquad p_\alpha(x_1) &= p_\alpha(x_2), \quad \text{if } \|x_1\|_2 = \|x_2\|_2.
\end{align*}
Nice distributions comprise Uniform and Gaussian as special cases. 
Throughout the paper, we consider the \emph{family} of $p_\alpha$ and $p_\beta$ to be the same (e.g., both Uniform or both Gaussian). This is not required for our analyses, and simple extensions of our results could generalize further to these having different forms.

The \emph{lower} interference distance for a given $h$ is now defined as the minimum distance between any two positive partitions, i.e., $\underline{\zeta}_h = \min_{i\neq j}{\rm dist}(I_i,  I_j)$. The lower interference distance over a family $\H$ is defined as $\underline{\zeta}_{\H} = \min_h \underline{\zeta}_h$. We can now state the detailed version of \cref{lem:alpha0best}. 
The full proof, along with further properties of these nice distributions, can be found in \cref{sec:proofunifalpha0bestexpanded}.


\begin{restatable}[]{theorem}{alphazerobest}\label{thm:Thm4.1}
For nice noise distributions $p_\alpha = {\rm Unif}(B_{\ell_2}(0, \alpha))$ and $p_\beta = {\rm Unif}(B_{\ell_2}(0, \beta))$, there exists $\H_1$ with interference distance $\underline{\zeta}_{\H_1} > \max(\alpha, \beta)$, such that for all $h \in \H_1$ we have $\Delta_{0, \beta}(h) < \Delta_{\alpha, \beta}(h)$ for all $0 < \alpha < \underline{\zeta}_{\H_1}$ and all $0 < \beta < \underline{\zeta}_{\H_1}$. \label{lem:unifalpha0bestexpanded}
\end{restatable}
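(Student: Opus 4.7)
The plan is to build $\H_1$ from geometrically explicit examples so that every computation reduces to a volume of $\ell_2$-ball intersections. Fix a separation $\zeta > 0$ and let $\H_1$ be the class of joint distributions on a bounded $\X$ of diameter $1$ with $p_X$ of strictly positive density and Bayes regressor $h = \mathbb{1}_{\I^+}$, where $\I^+ = \sqcup_{k=1}^M B_{r_k}(c_k)$ is a disjoint union of $\ell_2$-balls with $r_k > \zeta$ and pairwise distance ${\rm dist}(I_i, I_j) > 2\zeta$; this makes $\underline{\zeta}_{\H_1} > 2\zeta$, which in particular exceeds $\max(\alpha,\beta)$ for $\alpha,\beta \in (0,\zeta)$. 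Since $h \in \{0,1\}$, the Bayes risk of $\psi(h)$ is zero, so $\Delta_{\alpha,\beta}(h)$ coincides with the risk of the noise-augmented smoothed classifier.

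The first step is to decouple the analysis per ball. Because every $x$ within distance $\alpha$ of any $I_k$ is further than $\alpha$ from every other $I_j$, the convolution $(h * p_\alpha)(x) = |B_\alpha(x) \cap I_k|/|B_\alpha|$ picks up a contribution from at most one $I_k$; the same holds for the subsequent $p_\beta$-smoothing, using $I_k(\alpha) \subseteq I_k$ and $\beta < \zeta$. Combining with the convolution form of ${\rm Smooth}$ from the earlier proposition, the positive region of the noise-augmented smoothed classifier decomposes as $\sqcup_k I_k(\alpha,\beta)$ where
\[
I_k(\alpha) := \{x : |B_\alpha(x) \cap I_k| \geq |B_\alpha|/2\}, \qquad I_k(\alpha,\beta) := \{x : |B_\beta(x) \cap I_k(\alpha)| \geq |B_\beta|/2\},
\]
and the benchmark case is $I_k(0,\beta) = \{x : |B_\beta(x) \cap I_k| \geq |B_\beta|/2\}$.

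The geometric heart of the proof is to show each $I_k(\alpha)$ is a concentric ball $B_{\rho_k(\alpha)}(c_k)$ with $\rho_k(\alpha) < r_k$ strictly whenever $\alpha > 0$. Rotational symmetry of $p_\alpha$ about any axis through $c_k$ forces $I_k(\alpha)$ to be such a concentric ball. For the strict inequality, at any boundary point $x$ of $B_{r_k}(c_k)$ the outward tangent hyperplane slices $B_\alpha(x)$ exactly in half, while strict convexity of the sphere makes $B_{r_k}(c_k) \cap B_\alpha(x)$ a proper subset of the inward half of $B_\alpha(x)$ with positive-volume deficit, so $|B_\alpha(x) \cap I_k| < |B_\alpha|/2$. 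Applying the analogous monotonicity step to the pair $I_k(\alpha) \subsetneq I_k$ yields $I_k(\alpha,\beta) = B_{\rho_k(\alpha,\beta)}(c_k)$ with $\rho_k(\alpha,\beta)$ strictly decreasing in $\alpha$; the key observation is that whenever $A \subsetneq B$ are concentric balls, any boundary point of $\{y : |B_\beta(y) \cap B| \geq |B_\beta|/2\}$ sees strictly positive $B_\beta$-mass from the annulus $B \setminus A$ and therefore fails the threshold for $A$.

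Putting things together, each $I_k(\alpha,\beta) \subsetneq I_k(0,\beta) \subseteq I_k$, and strict positivity of $p_X$ gives
\[
\Delta_{\alpha,\beta}(h) - \Delta_{0,\beta}(h) \;=\; \sum_k \mathbb{P}\bigl(X \in I_k(0,\beta) \setminus I_k(\alpha,\beta)\bigr) \;>\; 0.
\]
The main obstacle I anticipate is the strict shrinkage claim $\rho_k(\alpha) < r_k$: it is intuitive from the strict convexity of the sphere but requires a short explicit geometric computation bounding the volume of the convex spherical cap $B_\alpha(x) \cap B_{r_k}(c_k)$ at a boundary point $x$. Everything else is monotonicity of set inclusion under convolutional thresholding, followed by integration against the positive density.
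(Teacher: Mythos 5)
Your construction and argument are essentially the same as the paper's: $\H_1$ is a union of well-separated $\ell_2$-balls, separation decouples the convolution per ball, each ball shrinks strictly and concentrically under $\alpha$-augmentation and then $\beta$-smoothing, and strict positivity of $p_X$ turns the strict containment $I_k(\alpha,\beta)\subsetneq I_k(0,\beta)$ into a strict risk gap (the paper takes $h=0.5+\tau$ on $\I^+$ rather than your indicator $h$, and phrases the shrinkage via the inverse-CDF map $A_{\alpha,r}$ rather than your tangent-hyperplane computation, but these are cosmetic). The one quantitative mismatch is your factor of two: by requiring pairwise distance $>2\zeta$ and only allowing $\alpha,\beta<\zeta$, you prove the inequality for $\alpha,\beta$ up to half of $\underline{\zeta}_{\H_1}$, whereas the theorem asserts it for all $\alpha,\beta<\underline{\zeta}_{\H_1}$; your extra factor is there to control points in the gaps between balls, a step the paper's proof glosses over.
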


While we tightly bound $\Delta_{\alpha, \beta}$ from above and below in \cref{lem:unifalpha0bestexpanded}, we are able to do so by assuming that the positive partitions $\I^+$ are perfectly spherical. 
%
In the following result, we show that we can in fact generalize our proof technique to obtain an upper-bound for $\Delta_{\alpha, \beta}$ in \cref{th:main_full} that only assumes that $h$ is supported on the (bounded) data domain $\X$, with partitions of arbitrary shape. In doing so, we obtain an upper-bound that might be loose in general. However, we note that there exist simple distributions ($\H_1$) where the upper-bound in \cref{th:main_full} is tight, implying that it is the best one could hope for without assuming anything else about $h$. The full proof can be found in \cref{sec:proofmainfull}. 

\begin{restatable}[]{theorem}{mainfull}
For nice noise distributions $p_\alpha, p_\beta$, 
for any $h$ supported on a bounded domain $\X$,
\label{th:main_full}
\begin{equation}
\Delta_{\alpha, \beta}(h) \leq 1 - \sum_k p_X(B_{\ell_2}(\hat x^k, (\omega^k_{h, \tau} - r^k_{\alpha, \beta})_+)), \label{eq:improved}
\end{equation}
where $r^k_\alpha = \sqrt{\Psi_\alpha^{-1}\left( \frac{0.5}{0.5 + \tau} \right)}$ if $\I_k$ is a positive partition, i.e., $\I_k \in \I^+$ and $r^k_\alpha = \sqrt{\Psi_\alpha^{-1}\left( \frac{0.5}{0.5 - \tau} \right)}$ otherwise, i.e., $\I_k \in \I^-$.  Further, $r^k_{\alpha, \beta} = r^k_\alpha + \sqrt{\Psi_\beta^{-1}(0.5)}$, and
%
 $\Psi_\alpha, \Psi_\beta$ are the CDFs of the distribution of $\|z\|_2^2$ when $z \sim p_\alpha$ and $z \sim p_\beta$, respectively. $p_X(S)$ denotes the measure of the set $S$ under $p_X$. 
The upper bound \eqref{eq:improved} holds for any choice of $\{(\omega^k_{h, \tau}, \hat x^k)\}$ such that $\hat x^k$ is the center of a ball of radius $\omega^k_{h, \tau}$ completely contained in $\I_{k, \tau}$ defined as $\I_{k, \tau} = \{x \in \I_k \colon |h(x) - 0.5| \geq \tau\}$. We can choose the sequence  $\{(\omega^k_{h, \tau}, \hat x^k)\}$ such that the upper bound \eqref{eq:improved} is minimized. 
\end{restatable}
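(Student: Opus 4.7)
The plan is to show, for each partition $\I_k$, that the smoothed classifier $\text{Smooth}_\beta(\psi(h * p_\alpha))$ agrees with the base classifier $\psi(h)$ on an explicit ball $S_k = B_{\ell_2}(\hat x^k, (\omega^k_{h,\tau} - r^k_{\alpha,\beta})_+)$ inside $\I_k$. Since the partitions are disjoint and the two classifiers agree pointwise on $\bigcup_k S_k$, their $0$-$1$ losses coincide there, so $\Delta_{\alpha,\beta}(h) \leq \mathbb P(X \notin \bigcup_k S_k) = 1 - \sum_k p_X(S_k)$, which is exactly \eqref{eq:improved}. The proof therefore reduces to proving the agreement claim, which I would do via two nested triangle-inequality arguments---one for the convolution $h * p_\alpha$ and one for the subsequent smoothing by $p_\beta$.

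Fix a positive partition $\I_k \in \I^+$. By hypothesis $B_{\ell_2}(\hat x^k, \omega^k_{h,\tau}) \subseteq \I_{k,\tau}$, so $h \geq 0.5 + \tau$ on this ball. For any $y$ with $\|y - \hat x^k\|_2 \leq \omega^k_{h,\tau} - r^k_\alpha$, the triangle inequality gives $y - Z \in B_{\ell_2}(\hat x^k, \omega^k_{h,\tau})$ whenever $\|Z\|_2 \leq r^k_\alpha$, hence $h(y - Z) \geq 0.5 + \tau$ on this event. By spherical symmetry of $p_\alpha$, the definition of $r^k_\alpha$ gives $\mathbb P_{Z \sim p_\alpha}(\|Z\|_2 \leq r^k_\alpha) = \Psi_\alpha((r^k_\alpha)^2) = 0.5/(0.5+\tau)$, so bounding the complement by $h \geq 0$ yields
\[(h * p_\alpha)(y) \ge (0.5 + \tau) \cdot \frac{0.5}{0.5 + \tau} + 0 = 0.5,\]
and hence $\psi(h * p_\alpha)(y) = 1$ throughout $B_{\ell_2}(\hat x^k, \omega^k_{h,\tau} - r^k_\alpha)$. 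The identical shrink-by-quantile argument, now applied to $p_\beta$ with $s_\beta := \sqrt{\Psi_\beta^{-1}(0.5)}$ so that $\mathbb P_V(\|V\|_2 \leq s_\beta) = 0.5$, then certifies $\text{Smooth}_\beta(\psi(h * p_\alpha))(x) = 1$ for all $x$ with $\|x - \hat x^k\|_2 \leq \omega^k_{h,\tau} - r^k_\alpha - s_\beta = \omega^k_{h,\tau} - r^k_{\alpha,\beta}$, matching $\psi(h)(x) = 1$ since this smaller ball still lies in $\I_k$.

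For a negative partition $\I_k \in \I^-$ the argument is symmetric: $h \leq 0.5 - \tau$ inside $\I_{k,\tau}$ together with the worst-case bound $h \leq 1$ outside yields $(h * p_\alpha)(y) \leq 0.5$ under the corresponding radial quantile $r^k_\alpha$, and a second shrinkage by $s_\beta$ then produces $\text{Smooth}_\beta(\psi(h * p_\alpha))(x) = 0 = \psi(h)(x)$. Collecting the agreement balls $S_k$ over both partition types---and using that $\{S_k\}$ are pairwise disjoint because $S_k \subseteq \I_k$ and the $\I_k$ are disjoint---I obtain
\[\Delta_{\alpha,\beta}(h) = \E\!\left[\mathbb{1}\{\text{Smooth}_\beta(\psi(h * p_\alpha))(X) \neq Y\} - \mathbb{1}\{\psi(h)(X) \neq Y\}\right] \le 1 - \sum_k p_X(S_k),\]
and the bound can be minimized over every admissible choice of $\{(\omega^k_{h,\tau}, \hat x^k)\}$. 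The main obstacle I anticipate is the clean bookkeeping of the two triangle-inequality shrinkings so that the quantile thresholds $r^k_\alpha$ and $s_\beta$ emerge exactly from the definitions of $\Psi_\alpha, \Psi_\beta$; handling both partition types uniformly (and tracking the sign in the negative case) is the subtlest part. The resulting bound is intentionally loose in general, since replacing $h$ by $0$ or $1$ outside the $\tau$-confident balls is conservative, which is also precisely why \eqref{eq:improved} is tight only on the structured family $\H_1$ from \cref{lem:unifalpha0bestexpanded}.
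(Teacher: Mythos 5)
Your proposal is correct and follows essentially the same route as the paper's proof: the same convolution lower bound $(h*p_\alpha)(y)\geq(0.5+\tau)\,\Psi_\alpha\bigl((r^k_\alpha)^2\bigr)=0.5$ on a ball shrunk by the radial quantile $r^k_\alpha$, the same second shrinkage by $\sqrt{\Psi_\beta^{-1}(0.5)}$ for the outer smoothing, and the same final step bounding the excess risk by the $p_X$-mass of the complement of the (disjoint) agreement balls. The only cosmetic difference is that you work directly on the inradius ball $B_{\ell_2}(\hat x^k,\omega^k_{h,\tau})$, whereas the paper first establishes the contraction of all of $\I_{k,\tau}$ via a chain of successively stronger conditions and only then restricts to the inscribed ball.
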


This results upper bounds the excess risk by one minus the sum of the measures of balls under the data distribution. Each partition $\I_k$ contributes to the upper bound in \eqref{eq:improved} via the difference $(\omega^k - r^k)$
. The first term $\omega^k_{h, \tau}$ denotes the \emph{inradius} of a subset of $\I_k$, defined as the portion of $\I_k$ classified with a confidence margin $\tau$, i.e., $\I_{k, \tau} = \{x \in \I_k \colon |h(x) - 0.5| \geq \tau\}$. The second term $r^k$ captures the shrinkage produced by smoothing. Specifically, $r^k_\alpha$ denotes the shrinkage caused in $\I_k$ while moving from the original classifier to the noise-trained classifier, i.e., $\psi(h) \to \psi(h * p_\alpha)$. Similarly, $r^k_{\alpha, \beta}$ denotes the shrinkage caused while moving from the noise-trained classifier to the randomized smoothed classifier, i.e., $\psi(h*p_\alpha) \to \psi(\psi(h * p_\alpha)*p_\beta)$. For any fixed $\beta$, as we increase the noise-augmentation strength $\alpha$, the shrinkage $r^k_{\alpha, \beta}$ also increases, leading to an increase in the upper bound in \cref{eq:improved}. 

\subsection{Small Separation Regime} \label{subsec:separation_small}
We will show that, unlike what is reflected by \cref{thm:Thm4.1}, there exists a family of distributions $\H_2$ characterized by a small separation distance where $\Delta_{\alpha, \beta_0}$ is indeed minimized at some $\alpha > 0$, which is the behavior observed in practice in common image datasets. 

We will begin with an illustrative one-dimensional example of a data distribution supported on $\X \subset \R$ where we will show that the interference distance $\zeta$ explicitly controls whether any benefit can be obtained by noise-augmentation. We will observe the basic structure required in $h$ for this to occur, and then generalize this structure to construct examples supported on $\X \subset \R^d$ forming the family $\H_2$ required for \cref{lem:alphapositivebest_detailed}. 

\textbf{$1-$dimensional example} We will now walk through the four panels of \cref{fig:onedimexample}. First, we let $\bar h$ be defined as
\begin{equation}
\bar h(x) = 
\begin{cases}
0 &\quad x \in (-0.5, c_1) \cup (c_2, c_3) \cup (c_4, 0.5) \\
1 & \quad \text{otherwise},
\end{cases} 
\end{equation}
where the specific values of $c_1, \ldots c_4$ 
are not important for the example but can be found in \cref{sec:proofalphapositivebest_detailed}. Additionally, let $p_X$ be uniform in $[c_1, c_4]$ -- this assumption is not needed, but simplifies the following discussion. The resulting $\psi(\bar h)$ is plotted in the top-left panel of \cref{fig:onedimexample}. 
%

For smoothing, we use the nice distribution $p_{\beta_0} = \text{Unif}([-\beta_0/2, \beta_0/2])$ where $\beta_0$ is just large enough, so that $(\psi(\bar h) * p_{\beta_0})(x) = 0$ for all $x \in [c_1, c_4]$. In other words, randomized smoothing ensures that the smoothed classifier ${\rm Smooth}_{\beta_0}(\psi(\bar h))$ outputs a consistent label across the input, hence ensuring robust classification (and in particular, that label is 0). However, observe that the benign risk of the smoothed classifier $R({\rm Smooth}_{\beta_0}(\psi(h)))$ is very high, as it makes an error whenever $x \in [c_1, c_2] \cup [c_3, c_4]$. This situation is shown in the bottom left panel of \cref{fig:onedimexample}. 
\begin{figure}
  \centering
  \includegraphics[width=0.6\textwidth]{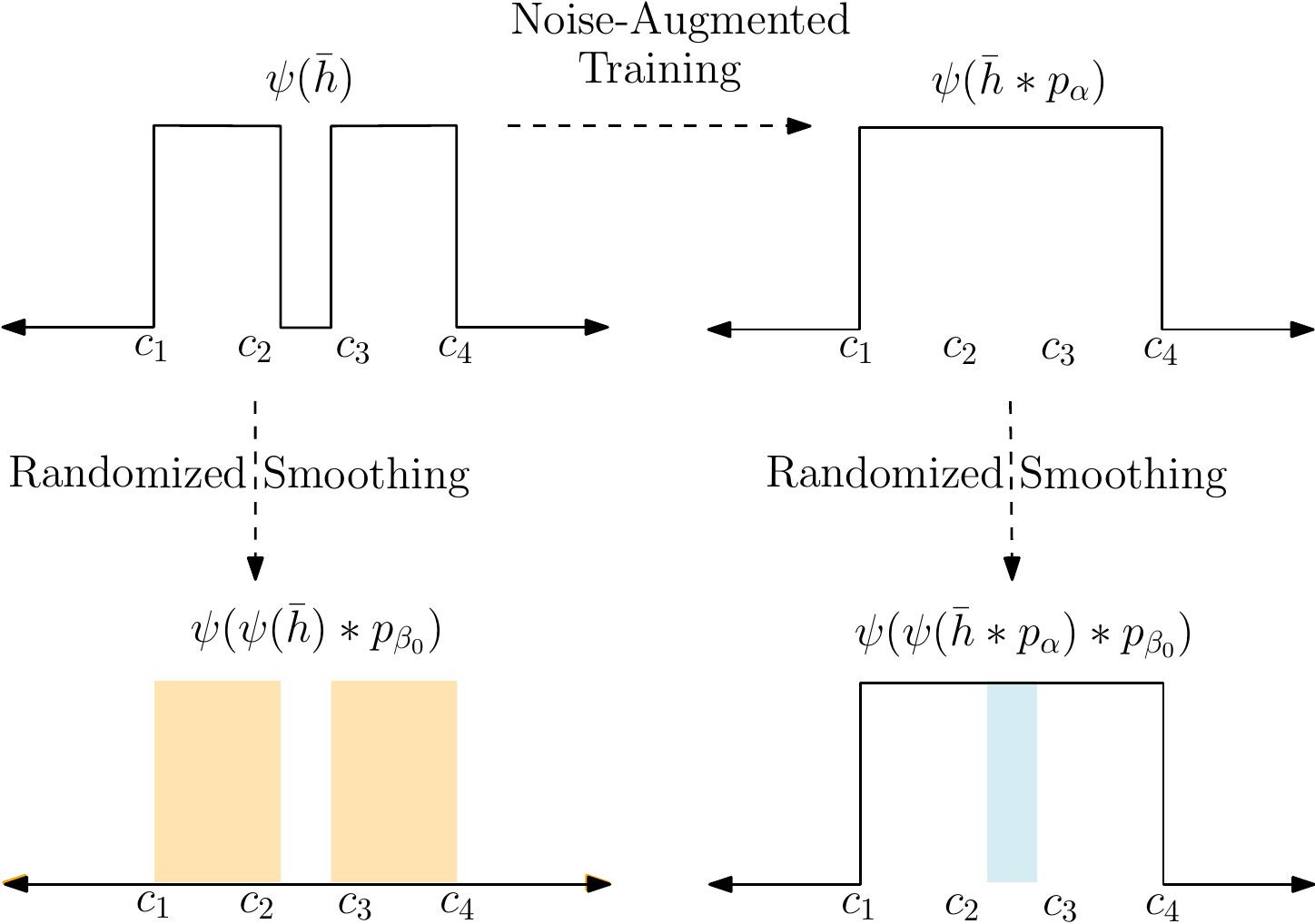}
  \caption{An example $\bar h \in \H_2$ where noise-augmented training is beneficial for randomized smoothing. The orange and blue shaded bars denote the regions where the smoothed classifiers make errors, i.e., $\psi(\bar h) \neq {\rm Smooth}_{\beta_0}(\psi(\bar h))$ and $\psi(\bar h) \neq {\rm Smooth}_{\beta_0}(\psi(\bar h * p_{\alpha_0}))$ respectively. Assuming a suitable data-distribution $p_X$, the orange region can have a higher $p_X$-mass than the blue region, i.e., $\Delta_{{\alpha_0}, \beta_0}(\bar h) < \Delta_{0, \beta_0}(\bar h)$. See the text below for details of the four sub-figures.}
  \label{fig:onedimexample}
\end{figure}

We now consider the nice noise distribution for noise-augmented training, defined as $p_{\alpha_0} = \text{Unif}([-{\alpha_0}/2, {\alpha_0}/2])$. For a suitable value of $\alpha$, we can ensure that $\psi(\bar h * p_{\alpha_0})(x) = 1$ for all $x \in [c_1, c_4]$. This is shown in the top right panel of \cref{fig:onedimexample}. But now, since $\beta_0$ was just enough to obtain ${\rm Smooth}_{\beta_0}(\psi(\bar h)) = 0$, the additional positive mass in $[c_2, c_3]$ causes the classification to flip, and now ${\rm Smooth}_{\beta_0}(\psi(\bar h) * p_{\alpha_0})(x) = 1$ for all $x \in [c_1, c_4]$. The benign risk of the smoothed classifier now is much lower than earlier, as it only makes an error on the small crevice $[c_2, c_3]$. This situation is shown in the bottom right panel of \cref{fig:onedimexample}. Thus, one can choose $\bar h, p, \alpha_0, \beta_0$ such that 
\begin{equation}
R({\rm Smooth}_{\beta_0}(\psi(\bar h) * p_{\alpha_0})) < R({\rm Smooth}_{\beta_0}(\psi(\bar h))), \label{eq:smallinterfeg}
\end{equation}
demonstrating that training with noise augmentation is indeed beneficial when $\bar h$ has a low interference distance. 
The specific values of $\alpha_0, \beta_0$ can be found in \cref{sec:proofalphapositivebest_detailed}.

Fixing the noise-augmentation distribution $p_{\alpha_0}$ and the smoothing distribution $p_{\beta_0}$, we now modify $\bar h$ to $\tilde h$ by increasing the interference distance $|c_2 - c_3|$ while maintaining the same structure, i.e. $\tilde{h}(x) = 0$ when $x \in (-0.5, \tilde{c}_1) \cup (\tilde{c}_2, \tilde{c}_3) \cup (\tilde{c}_4, 0.5)$, and $\Tilde{h}(x) = 1$ otherwise. When $|\tilde{c}_2 - \tilde{c}_3| > \alpha_0/2$, noise-augmentation with $p_{\alpha_0}$ is no longer effective, i.e., $\psi(\psi(\tilde{h}) * p_{\alpha_0}) = \psi (\psi(\bar h)*p_{\alpha_0})$. Intuitively, this is caused by insufficient positive mass near any class-0 point for the prediction to flip after noise-augmentation -- and this can be verified using the values of $\alpha_0, \beta_0$ provided in \cref{sec:proofalphapositivebest_detailed}). Applying randomized smoothing on $\psi(\psi(\tilde{h}) * p_{\alpha_0})$ now simply leads to ${\rm Smooth}_{\beta_0}(\psi(\tilde{h}) * p_{\alpha_0})(x) = 0$ everywhere, implying that the risk after smoothing is high, i.e., 
\begin{equation}
R({\rm Smooth}_{\beta_0}(\psi(\tilde h) * p_{\alpha_0})) \geq R({\rm Smooth}_{\beta_0}(\psi(\tilde h))). \label{eq:largeinterfeg}
\end{equation}
Thus, we have shown via this example that the interference distance directly controls whether we get any benefit out of training with noise-augmentation \eqref{eq:smallinterfeg} or not \eqref{eq:largeinterfeg}.

The distance $|c_2 - c_3|$ above corresponds to a small \emph{upper} interference distance $\overline{\zeta}$, which we define now as the maximum distance between any two positive partitions, i.e., $\overline{\zeta} = \max_{i\neq j}{\rm dist}(I_i, I_j)$. 
With this definition, we can now extend the ideas in the simple example above to general constructions in $d-$dimensions.
%
\begin{restatable}[]{theorem}{alphapositivebest}
\label{lem:alphapositivebest_detailed}
There exist nice distributions $p_\alpha, p_\beta$, a family $\H_2$ with low interference distance $\overline{\zeta_{\H_2}}$, and data-distributions $p_X$, such that for all $h \in \H_2$ we have $\Delta_{0, \beta_0}(h) > \Delta_{\alpha, \beta_0}(h)$ for some $\alpha > 0, \beta_0 > 0$.
\end{restatable}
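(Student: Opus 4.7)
The plan is to lift the one-dimensional construction of \cref{fig:onedimexample} to $d$ dimensions. Concretely, I would define $\H_2$ as the family of conditional distributions $h$ whose base classifier $\psi(h)$ has a positive partition consisting of exactly two disjoint closed $\ell_2$-balls $B_1, B_2$ of equal radius $\rho$, centered at $\hat x_1$ and $\hat x_2$ with upper interference distance $\overline{\zeta}_h = \|\hat x_1 - \hat x_2\|_2 - 2\rho$ small relative to $\rho$, and with $\psi(h) = 0$ elsewhere on the (bounded) domain $\X$. I would take the nice noise distributions to be $p_\alpha = \mathrm{Unif}(B_{\ell_2}(0,\alpha))$ and $p_\beta = \mathrm{Unif}(B_{\ell_2}(0,\beta))$, and choose $p_X$ to concentrate nearly all of its mass on $B_1 \cup B_2$, with only negligible weight placed on the narrow gap $G$ between them.

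First, I would calibrate the smoothing radius $\beta_0$ so that, without noise-augmentation, $(\psi(h)*p_{\beta_0})(x) < 0.5$ for every $x \in B_1 \cup B_2$. Geometrically this amounts to ensuring that for any center $x$ inside $B_k$, the fraction of the $\beta_0$-ball that lies inside $B_1 \cup B_2$ is strictly below one half. Because $\rho$ is small relative to $\beta_0$ and $\overline{\zeta}_h$ is strictly positive, this overlap inequality can be made strict by taking $\beta_0$ sufficiently large compared to $\rho$. The resulting smoothed classifier $\mathrm{Smooth}_{\beta_0}(\psi(h))$ is then identically $0$ on $B_1 \cup B_2$, so its benign risk is approximately $p_X(B_1 \cup B_2) \approx 1$.

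Next, I would pick $\alpha_0 > 0$ so that $(\psi(h)*p_{\alpha_0})(x) \geq 0.5$ on the whole ``dumbbell'' $D = B_1 \cup B_2 \cup G$ including the gap. Since convolving the indicator of $B_k$ with $p_{\alpha_0}$ yields a radially decreasing bump supported on $B_{\ell_2}(\hat x_k, \rho + \alpha_0)$ that peaks at $\hat x_k$, and since the two positive balls are close to one another, taking $\alpha_0$ slightly larger than $\overline{\zeta}_h/2$ guarantees that the two enlarged regions overlap and the midpoint receives mass exceeding the $0.5$ threshold. The positive partition of $\psi(h*p_{\alpha_0})$ thus becomes a single simply-connected region containing $D$ whose inradius grows from $\rho$ to roughly $\rho+\alpha_0/2$. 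Applying \cref{th:main_full} to this new single partition (or a direct convolution computation with the uniform kernel) then shows that $\mathrm{Smooth}_{\beta_0}(\psi(h*p_{\alpha_0}))(x) = 1$ on a subset of $B_1\cup B_2$ whose $p_X$-measure is bounded away from zero, yielding $\Delta_{\alpha_0,\beta_0}(h) < \Delta_{0,\beta_0}(h)$.

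The main obstacle is the joint calibration of $(\rho, \overline{\zeta}_h, \alpha_0, \beta_0)$ so that two competing constraints are simultaneously satisfied: (i) $\beta_0$ must be large enough that smoothing the original classifier drives every $B_k$ below the $0.5$ threshold, yet (ii) $\beta_0$ must not be so large that even the noise-augmented, merged positive region is smoothed away. The small-separation hypothesis on $\overline{\zeta}_h$ is precisely what opens this window: the $\alpha_0$-convolution can merge $B_1$ and $B_2$ into a single connected component exactly when $\overline{\zeta}_h \lesssim 2\alpha_0$, whereas the $\beta_0$-convolution of $\psi(h)$ alone cannot. Once this regime is pinned down, the inequality $\Delta_{\alpha_0, \beta_0}(h) < \Delta_{0,\beta_0}(h)$ reduces to comparing the positive $p_X$-mass recovered on $B_1 \cup B_2$ after noise-augmentation against the $p_X$-mass of the narrow gap $G$, which is made strictly positive by the choice of $p_X$.
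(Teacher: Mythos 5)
Your strategy is essentially the same as the paper's: build a classifier whose positive partitions are separated by a narrow gap, calibrate $\beta_0$ so large that smoothing the base classifier alone drives every prediction to $0$, then calibrate $\alpha_0$ so that noise-augmented training fills the gap and merges the positive partitions into one large region that survives $\beta_0$-smoothing, and finally compare the $p_X$-mass of what is recovered against what the gap costs. The paper executes this plan in one dimension by first isolating four abstract sufficient conditions -- a lower bound on $\beta_0$ from \eqref{eq:cond1}, a two-sided window on $\alpha_0$ from \eqref{eq:cond2}, an upper bound on $\beta_0$ from \eqref{eq:cond3}, and a mass condition $p(Y=1) > 1/2 + \epsilon$ in \eqref{eq:closedef} -- showing these jointly imply $\Delta_{\alpha_0,\beta_0}(h) < \Delta_{0,\beta_0}(h)$, and then exhibiting a concrete $\bar h$ with $\omega = 0.23$, $\alpha_0 = 0.1$, $\beta_0 = 0.93$ that satisfies all of them; you instead go directly for the geometry of a $d$-dimensional dumbbell. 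That is a valid generalization, but your calibration claim for $\alpha_0$ is off: taking $\alpha_0$ slightly larger than $\overline{\zeta}_h/2$ only ensures that the \emph{supports} of $\1_{B_1} \ast p_{\alpha_0}$ and $\1_{B_2} \ast p_{\alpha_0}$ touch at the midpoint; it does not ensure that $(h\ast p_{\alpha_0})$ crosses the $0.5$ threshold there. The latter requires that more than half the volume of the $\alpha_0$-ball centered at the midpoint fall inside $B_1 \cup B_2$, which is a spherical-cap computation governed by $\overline{\zeta}_h$, $\alpha_0$, $\rho$, and $d$, and generically needs $\alpha_0$ to be a multiple of $\overline{\zeta}_h$ (in the paper's $1$-D example the correct constraints are $\alpha_0 \geq 4(0.25 - \omega)$ and $\alpha_0 \leq 2\omega$). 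This is the joint calibration you flag as the ``main obstacle''; the paper's modular conditions are exactly the machinery that resolves it and makes explicit where the small-interference hypothesis opens the window of valid $(\alpha_0, \beta_0)$. With the constants redone carefully (and with a quantitative handle on the cap volumes in $\R^d$), your approach would give a genuinely higher-dimensional instance of the result, which the paper does not bother to construct since a $1$-D example already proves existence.
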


We pause again to understand some implications of \cref{lem:alphapositivebest_detailed}. From the one-dimensional example and the proof of \cref{lem:alphapositivebest_detailed}, we see that training with noisy data, i.e., $\alpha > 0$, is better for distributions with low interference distance. We conjecture that natural distributions follow this structure as well, containing regions corresponding to one class that are packed closely in the domain. Interestingly, we find evidence in the favor of this conjecture in our experiments in \cref{sec:expts}. 
Moreover, this result also shows that there is no reason for the noise level $\alpha$ to be the same as the smoothing level $\beta$ for final smoothed classifier to obtain its lowest risk. 
This suggests that the common practice of using $\alpha = \beta$ for randomized smoothing (e.g., see \citep{cohen2019certified}) might not be optimal. Indeed, we will shortly demonstrate in our experimental section that better choices for smoothing can be found by relaxing the constraint that the training noise level $\alpha$ should be equal to $\beta$. Additionally, our findings might provide a theoretical foundation for recent works \citep{alfarra2020data,anderson2022certified,sukenik2021intriguing} on obtaining an \emph{adaptive} $\beta$ for smoothing at each input point.

\paragraph{Exactly Learning Bayes Classifiers} We pause here to comment on the assumptions we made in analyzing randomized smoothing on a noise-smoothed distribution, ${\rm Smooth}_\beta(\psi(h * p_\alpha))$. As we mentioned above -- and as we prove in \cref{app:bayes_classifier} -- our analysis assumed that one has access to the true conditional distribution of $\mathbb E [Y|X_s]$. However, in more realistic settings, the learned predictor might deviate from this optimal value. We now argue that our analysis is still relevant in these cases, too. 

On the one hand, in \cref{thm:NoAdvantageG}, we provide an extension of our result in \cref{thm:Thm4.1} by considering a predictor that is an inexact approximation to the conditional expectation. In other words, we assume that training with noise-augmentation results in a function $g(x)$ that is not too far from the true smoothed conditional distribution, $|g(x)-h\ast p_\alpha(x)|\leq \eta$ for all $x \in \X$. As we show in \cref{sec:bayeserrors}, our same proof technique allows us to show that indeed, there exist distributions where no benefit can be expected from randomized smoothing, even in this more general case where the assumption of learning the true conditional distribution is relaxed. 

On the other hand, it is also not hard to see that our results that provide an upper bound to the excess risk based on Bayes classifier, $\Delta_{\alpha,\beta}(h)$, are also useful in characterizing upper bounds to more realistic predictors that might deviate from the true conditional distribution. Indeed, given access to $g$ that is only an approximation to $h\ast p_\alpha$, one can show (see our derivation in \cref{sec:bayeserrors}) that the excess risk of smoothing $g$ can be upper bounded by
\begin{equation}
\Delta_{\alpha, \beta}(g, h) \leq \Delta_{\alpha, \beta}(h) + p_X\left((h * p_\alpha)(X) \neq g(X)\right).
\end{equation}
As a result, our upper bound in \cref{th:main_full} also provides an upper bound to the excess risk of more general predictors $g$ -- and the tightness of this bound is naturally controlled by the mass under $p_X$ of the errors between $g$ and $h\ast p_\alpha$.

\paragraph{Extension to Multiple Classes} Although our theory was developed for binary classification, a standard extension to $K$-way classification is possible, using the technique followed while certifying multi-class Randomized Smoothing based classifiers \citet{cohen2019certified}, \citet{yang2020randomized}. In this context, the class $Y = 0$ now represents one of the $K$-classes, and $Y = 1$ represents all the remaining classes. All our theorems would then extend, having an additional parameter for the class viewed as $0$. 

Having established our theoretical results, we now turn to an empirical verification of some of them on synthetic datasets, as well as MNIST and CIFAR-10.

\section{Experiments} \label{sec:expts}
\textbf{Synthetic Experiments.}
We begin by conducting synthetic experiments\footnote{We provide code for all of our experiments at \url{https://github.com/ambarpal/randomized-smoothing}.} with distributions resembling those used in the proofs of our existence results \cref{lem:alpha0best,lem:alphapositivebest}. We take $\X = [0, 100] \times [0, 100]$ as the data domain, and sample the positive partitions of $h$ as spheres at random locations in the domain (see \cref{sec:expt_details} for more details about the construction) to obtain the data-distribution $h^\zeta$, where the superscript $\zeta$ denotes the interference distance. \cref{fig:spheres} shows a few examples of $h^\zeta$ for different values of $\zeta$.
\begin{figure}[h]
\centering
$\zeta=0 \hspace{3.2cm} \zeta=10 \hspace{3.2cm} \zeta=20 \hspace{3.2cm} \zeta=30$
\includegraphics[width=\textwidth]{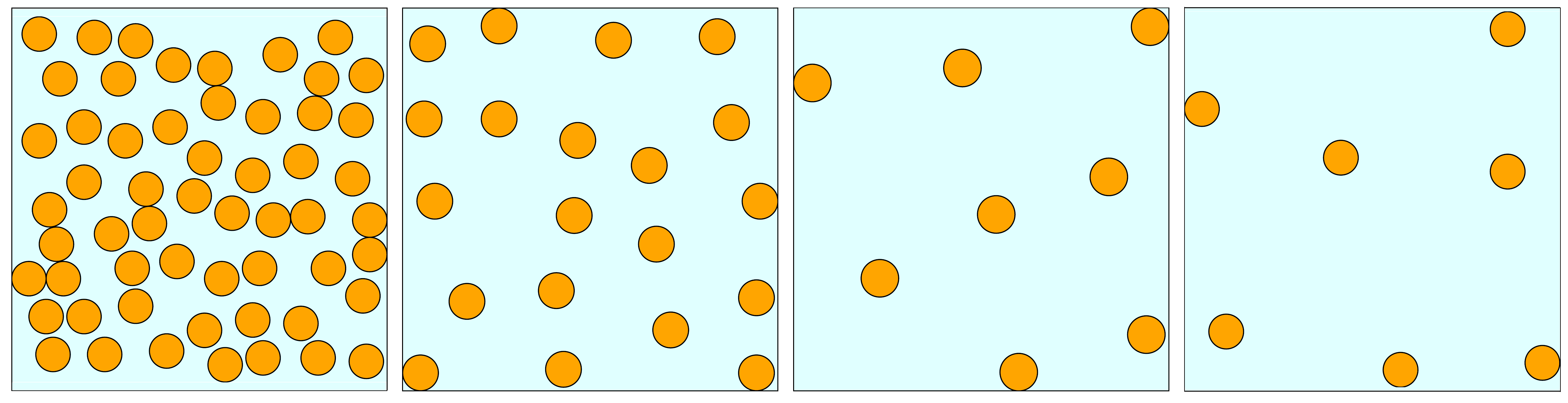}
\caption{Examples of $\psi(h^\zeta)$ for different values of $\zeta$. The blue regions denote $\psi(h^\zeta)(x) = 0$ and the orange regions denote $\psi(h^\zeta)(x) = 1$.} 
\label{fig:spheres}
\end{figure}

In \cref{fig:spheresplot}, we report quantitative results using our synthetic distributions as described above. Over several choices of interference distance $\zeta$, we first sample several synthetic data distributions $h^\zeta$. We then plot the quantity $\Delta_{\alpha, \beta} - \Delta_{0, \beta}$, which is the difference between the benign risks of smoothing a noise-augmented classifier, i.e., $R({\rm Smooth}_\beta(\psi(h * p_\alpha))$, and smoothing a classifier not trained with noise-augmentation, i.e., $R({\rm Smooth}_\beta(\psi(h)))$. We increase the smoothing strength $\beta$ moving from left to right in \cref{fig:spheresplot}, and plot the difference in the risks as a function of $\alpha$ for each $\beta$. Recall that noise-augmentation is useful only when we can find $\alpha_0, \beta_0$ such that $\Delta_{\alpha_0, \beta_0} - \Delta_{0, \beta_0} < 0$. Accordingly, we plot a dashed line whenever noise-augmentation is useful, i.e., $\exists \alpha_0, \beta_0 \ \Delta_{\alpha_0, \beta_0} < \Delta_{0, \beta_0}$ and solid otherwise, i.e., $\forall \alpha, \beta \ \Delta_{\alpha, \beta} \geq \Delta_{0, \beta}$.   

At high $\zeta$, we are operating in the \emph{large interference distance} regime and the results from \cref{subsec:separation_large} apply. Those results dictate that $\Delta_{\alpha, \beta_0}$ should be minimized at $\alpha = 0$ for any fixed $\beta_0 > 0$. Indeed, this phenomenon can be seen in \cref{fig:spheresplot} for $\zeta = 3$, where we see that the line remains solid for all $\beta$. 

On the other hand, at lower $\zeta$, we go into the \emph{small interference distance} regime and the results from \cref{subsec:separation_small} apply. Those results dictate that $\Delta_{\alpha, \beta_0}$ should be minimized at some $\alpha > 0$ given a large enough, but fixed $\beta_0 > 0$. Indeed this phenomenon is seen in \cref{fig:spheresplot}, where the lines tend to become dashed as $\beta$ increases. Thus, our synthetic experiments confirm the predictions from our theory. 

\begin{figure}[h]
\centering
\includegraphics[width=\textwidth]{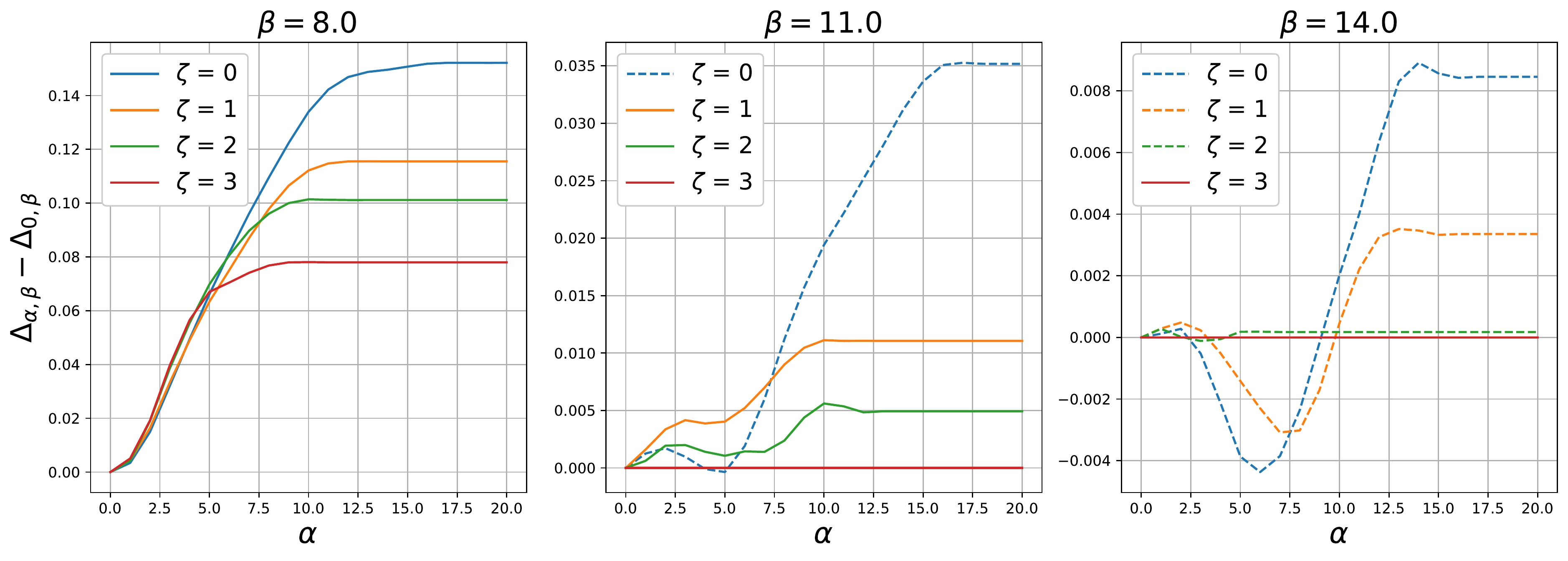}
\caption{Synthetic Experiments showing the variation in $\Delta_{\alpha, \beta}$. The line is dashed if there exists $\alpha > 0$ such that $\Delta_{\alpha, \beta} < \Delta_{0, \beta}$, and solid otherwise.}
\label{fig:spheresplot}
\end{figure}

\textbf{MNIST, CIFAR-10 Experiments.} We now proceed to conduct experiments with real data distributions. For a range of finely spaced $\alpha$ in $[0, 1]$, we train a standard CNN $h_\alpha$ with isotropic Gaussian noise-augmentation with variance $\alpha^2$, i.e., $p_\alpha = \mathcal{N}(0, \alpha^2 I)$. For each of these trained models, we use the isotropic-Gaussian with variance $\beta^2$, with $\beta \in [0, 1]$, as the smoothing distribution, i.e., $p_\beta = \mathcal{N}(0, \beta^2 I)$, to obtain the smoothed classifier ${\rm Smooth}_\beta(\psi(h * p_\alpha))$. We then plot the standard empirical estimate of the risk, $\hat \Delta_{\alpha, \beta} = \sum_{(x, y) \in S_{\rm test}} \mathbb{1}[{\rm Smooth}_\beta(h_\alpha)(x) \neq y]$ over the test set $S_{\rm test}$, for different $\alpha, \beta$ in \cref{fig:mnistrisk}.

\begin{figure}[h]
\begin{center}
\subcaptionbox{MNIST\label{fig:MNIST}}
{\includegraphics[trim=50 0 0 0, width=.48\textwidth]{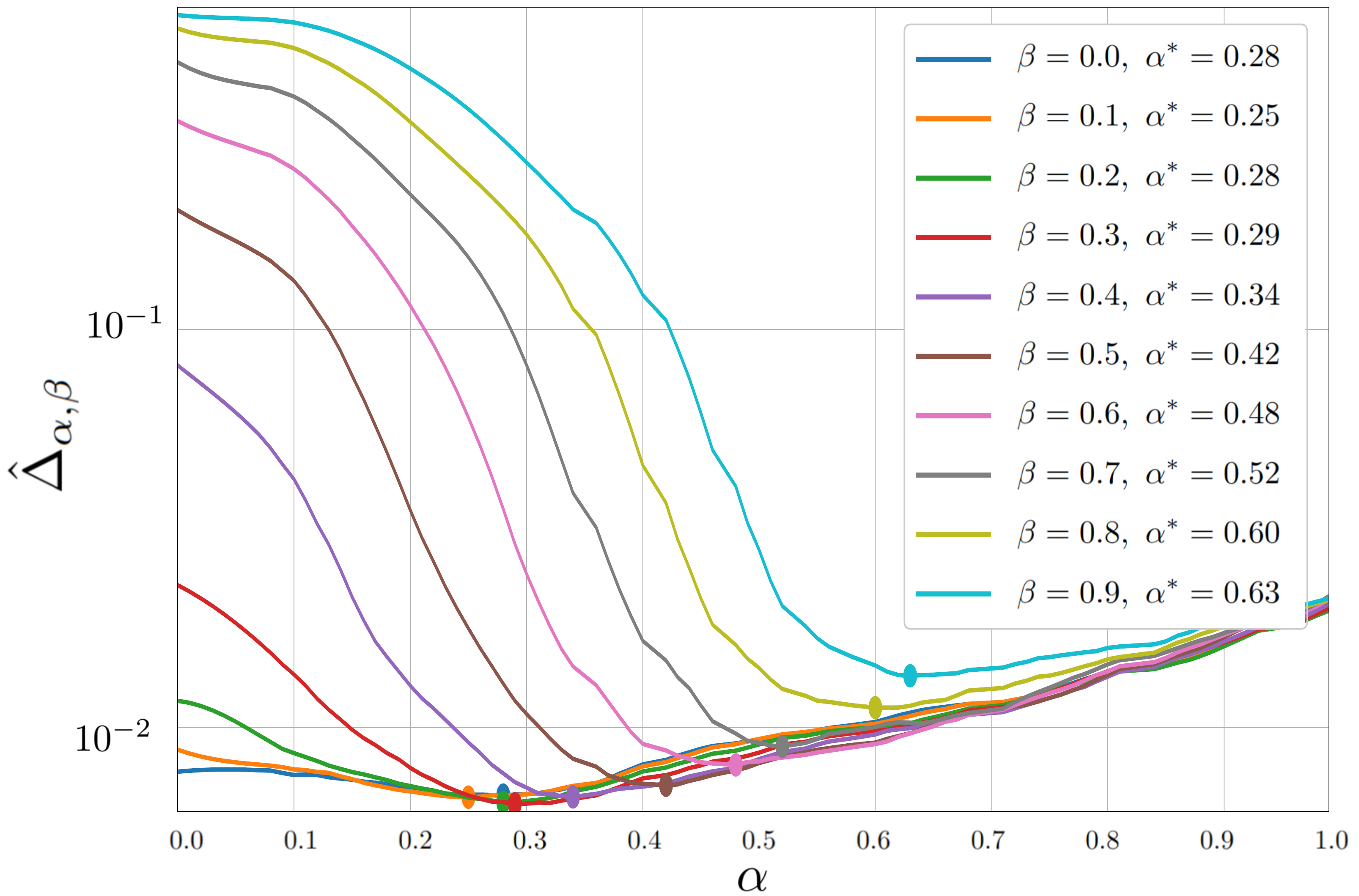}}
\subcaptionbox{CIFAR-10 \label{fig:CIFAR}}
{\includegraphics[trim=10 0 40 0 0,width=.485\textwidth]{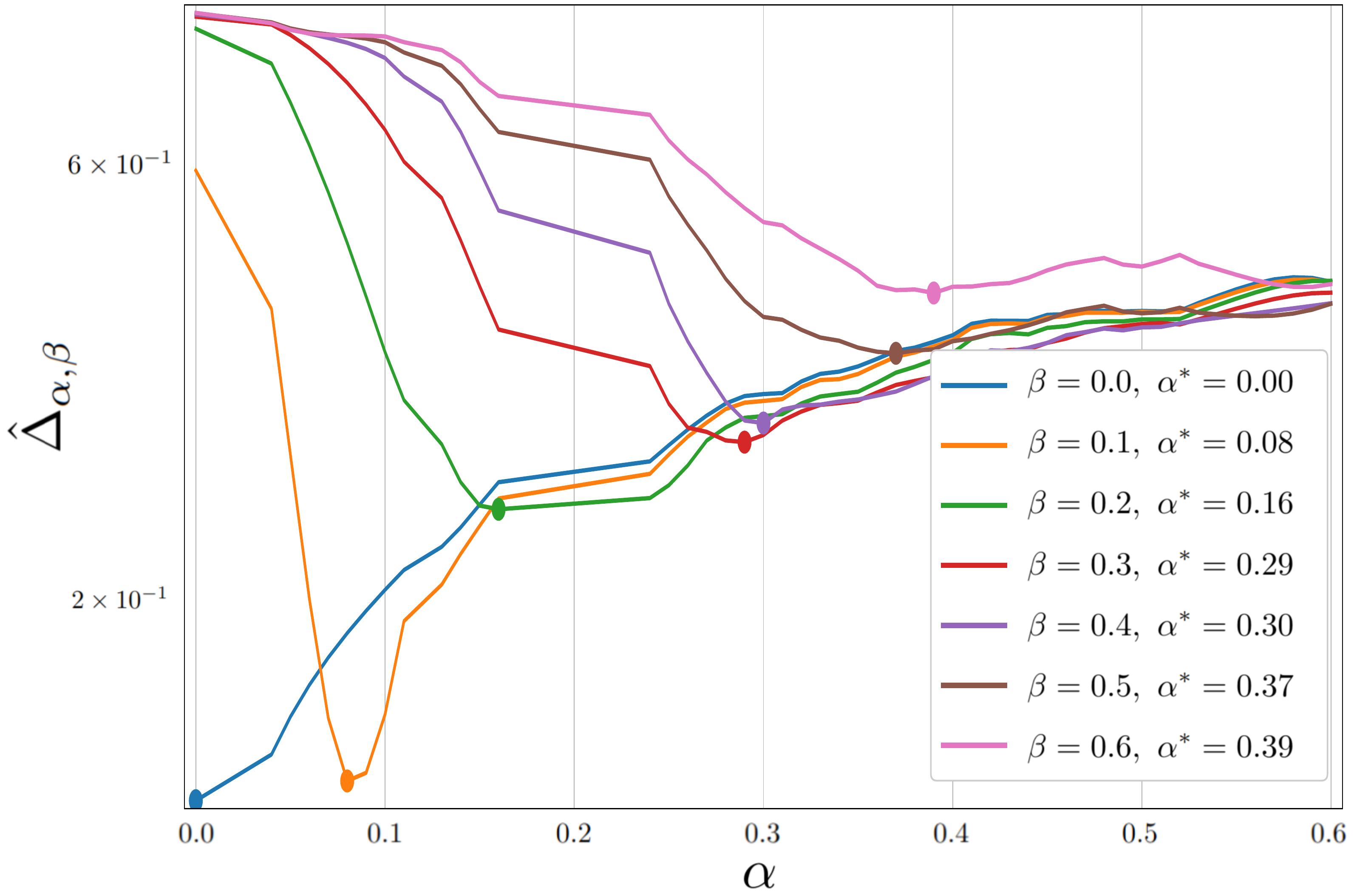}}
\caption{Plot of $\hat \Delta_{\alpha, \beta}$ for MNIST (\ref{fig:MNIST}) and CIFAR-10 (\ref{fig:CIFAR}). For each fixed value of $\beta = \beta_0$, the legend $\alpha^*$ (and the corresponding shaded disk on the plot) shows the minimizer of the $\Delta_{\alpha, \beta_0}$ curve.}
\label{fig:mnistrisk}
\end{center}
\end{figure}

The first observation that we make from the MNIST and CIFAR-10 plots in \cref{fig:mnistrisk} is that a non-zero data-augmentation is always beneficial, i.e., $\alpha^* > 0$. While not an implication arising from our theory, this behavior is reminiscent of our results in \cref{lem:alphapositivebest_detailed}, suggesting that real data distributions lie in the \emph{small interference distance} regime. Additionally, we note the similarity of the synthetic experiment plots at low $\zeta$ in \cref{fig:spheresplot} and the real-data curves in \cref{fig:mnistrisk} (upto scaling of the $\alpha$-axis, and the fact that we subtract $\Delta_{0, \beta}$ for normalization in \cref{fig:spheresplot}), providing empirical validation for our theoretical assumptions.

The second observation we make from \cref{fig:mnistrisk} is that, as suggested by \cref{lem:alphapositivebest_detailed}, the smoothing parameter need not be the same as the data augmentation parameter for the best performance of the smoothed classifier. This phenomenon has also been observed empirically in prior work \cite{alfarra2020data}, \cite{salman2019provably}, \cite{zhai2020macer}, and our results provide a theoretical foundation for such observations. 


\section{Related Work} \label{sec:relwork}
In this work, we study randomized smoothing, which is currently the method of choice for obtaining robustness guarantees for deep, complex neural network classifiers. Since the early proposals of stochastic smoothing \citep{liu2018towards}, many works have derived robustness \emph{certificates} against adversarial attacks for smoothed classifiers \citep{lecuyer2019certified, li2018certified, cohen2019certified}. These certificates obtain a radius $r$ around an input $x$ such that the network prediction is robust in an $\ell_p$ ball of radius $r$ around $x$. Studied balls include $p = 0$ \citep{levine2020robustness,lee2019tight}, $p = 1$ \citep{teng2019ell_1}, $p = 2$ \citep{cohen2019certified,salman2020black,salman2019provably} and $p = \infty$ \citep{zhang2002covering}. Certificates have also been extended to non-$\ell_p$ smoothing distributions \citep{levine2020wasserstein}. For different notions of optimality, the optimal certificates have also been characterized \citep{yang2020randomized, kumar2020curse}. 

However, there has been lesser attention given to the impact that randomized smoothing has on the benign accuracy of the smoothed classifier, and a theoretical understanding for the need---or lack thereof---for noise-augmented training of the classifier in order to produce an accurate smoothed classifier. Recent work in \citet{gao2020analyzing}, which is closest to ours in spirit, takes a first step towards this goal by analyzing the class of hypotheses that are realizable when training on smoothed data. Their result shows that this re-training strictly reduces the hypothesis class whenever the smoothing strength is above a threshold. In this way, \citet{gao2020analyzing} provide some theoretical basis for the reduction of accuracy of the so smoothed-trained classifier. Elsewhere, \citet{mohapatra2021hidden} theoretically demonstrate that randomized smoothing leads to shrinkage of class boundaries for simple data distributions. Importantly, their definition of ``shrinkage'' is that the bounding sphere (or cone, for ``semi-bounded'' regions) of the decision boundary becomes smaller. This is  different from our analysis, as a shrunk decision region $R_\sigma$ might not be a subset of the original region $R$ under this definition. In comparison, our universal result in \cref{th:main_full} holds in much more generality for arbitrary (bounded) data-distributions, and our existence results \cref{lem:unifalpha0bestexpanded,lem:alphapositivebest_detailed} construct specific synthetic datasets. While \citet{mohapatra2021hidden} only analyse the distance $\|f - {\rm Smooth_\beta}(f)\|_2$ given a trained classifier $f$, we directly bound the risk $R({\rm Smooth_\beta}(f))$, and shrinkage does show up in some parts of our analysis. Our proof techniques are able to handle both noise-augmentation of strength $\alpha$ and randomized smoothing of strength $\beta$ simultaneously, allowing us to discover cases where noise-augmentation helps randomized smoothing (in other words, showing cases where the combined effect might not be a shrinkage of the decision boundaries). Some recent works have tried to move away from having the same smoothing and noisy-training distribution, by studying input-dependent smoothing \citep{alfarra2020data,anderson2022certified,sukenik2021intriguing}. Lastly, and in a very different context of graph convolutional networks (GCNs), recent results in \cite{keriven2022not} show that a positive level of smoothing (defined as the number of GCN layers) can be beneficial for a supervised learning task, before becoming detrimental once smoothing is too large. Studying further connections between our analysis and smoothing for GCNNs constitutes an interesting direction of research.

\section{Conclusion, Limitations and Future Work} \label{sec:conclusion}
In this work, we provided a theoretical understanding of how noise-augmented training affects classifiers defended using randomized smoothing. We identified a parameter of the data distribution, which we dubbed \textit{interference distance}, that, for certain families of data-distributions, determines whether noise-augmented training could be beneficial for randomized smoothing. Using this parameter, we showed there exist data distributions where noise-augmented training reduces the accuracy of the final smoothed classifier, contrary to common observation. Our upper bound to the benign risk in \cref{th:main_full}  for very general conditional distributions -- and which is tight for distributions presented in \cref{thm:Thm4.1}-- demonstrates that no benefit of randomized smoothing is possible without further structural distributional assumptions. In turn, we showed the existence of distributions with small interference distance in \cref{lem:alphapositivebest_detailed}, where improvements by noise-augmented training is possible.  We complemented our theoretical results with empirical validation, suggesting that real-world distributions lie in regimes where the parameter $\zeta$ is small, and so noise-augmented training can indeed help randomized smoothing if the smoothing strength is chosen properly. Contrary to intuition, the proof of our theoretical result in \cref{lem:alphapositivebest_detailed}, and our experiments, suggest that this smoothing strength need not be the same as the noise-augmentation strength for best performance of the final predictor.
%

We now revisit the open questions that we posed in \cref{sec:intro}. Firstly, why is noise-augmentation helpful while training the base classifier? We saw in the bottom row of \cref{fig:separation} that noise augmented training helps the smoothed classifier have good benign accuracy when we are in the low-interference distance regime, as it alleviates the degradation of the risk incurred by randomized smoothing at deployment time. Secondly, how does the performance of the smoothed classifier depend on the training noise distribution? We saw in \cref{lem:unifalpha0bestexpanded,lem:alphapositivebest_detailed} that in the high interference regime, training with low to zero noise leads to good performance of the smoothed classifier. On the other hand, in the low-interference regime, the training noise can be tuned to extract good performance of the smoothed classifier.  These conclusions are derived for the cases when the same family of distribution is used for augmentation during training and for smoothing at deployment time. Future work could consider extending these to different classes of distribution. Thirdly, is noise-augmented training needed for all data-distributions? The answer is no, noise-augmentation is not always effective in reducing the risk of the classifier after randomized smoothing, and the interference-distance parameter can distinguish between families of data-distributions where noise-augmentation helps randomized smoothing. 
It remains unclear how to efficiently determine this parameter for real data-distributions, which would allow us to determine whether noise-augmented training is needed at all, or what the optimal parameters of the augmentation distribution should be. This remains matter of future research.

We note that our results are independent of the learning aspect of the problem, i.e., we assume that the learning algorithm is good enough to obtain the predictor with the lowest possible risk. These assumptions, while simplistic, nonetheless allowed us to characterize observations that have practical relevance. Though we provide some results on how to relax such assumptions in \cref{sec:bayeserrors}, we see this as a starting point for future work that should explore these trade-offs in more generality. Additionally, our experiments and theory suggest that the best data-augmentation strength is not the same as the smoothing strength in general, and we hope that future work can theoretically quantify and provide precise estimates for this quantity. Finally, in this paper, we studied the benign risk of classifiers after smoothing. Future work could extend these results to the study of the robust risk (which is lower bounded by the benign risk), and extend our techniques to quantify how the former depends on the noise-augmentation distribution.

\subsection*{Acknowledgements}
The authors thank René Vidal for insightful comments, as well as the anonymous reviewers for their valuable suggestions, which helped improve this manuscript. AP acknowledges the support of DARPA (via GARD HR00112020010) and NSF (via Grant 1934979). JS acknowledges support from DARPA GARD HR00112020004.

\newpage
\bibliographystyle{plainnat}
\bibliography{smoothing_TMLR}

\appendix
\onecolumn
\section{Experimental Details} \label{sec:expt_details}
\paragraph{Synthetic Experiments} To construct our synthetic distributions, we take $\X = [0, 100] \times [0, 100]$ as the data domain. We start with an empty set $\I = \{\}$ and radius $r = 10$. We repeat the following steps 500 times: (1) Sample a center $c$ in $\X$ uniformly at random. (2) Test whether $\|c - c'\|_2 > \zeta + 2r$ for all spheres $B_{\ell_2}(c', r) \in \I$. If true, add $B_{\ell_2}(c', r)$ to $\I$, otherwise reject this sample. At the end of this process, we obtain a set of spheres $\I$ satisfying the property $\|c - c'\|_2 \geq \zeta + 2r$ for all $c \neq c'$, where $\zeta$ is the interference distance.

\paragraph{MNIST Experiments} We take 100 samples from each of $p_\alpha, p_\beta$ per image $x$ in the MNIST test set to compute a single point $\Delta_{\alpha, \beta}$. The figures are quite stable against random initialization due to the MNIST test set size of $10000$ samples. We use the following architecture for the neural network classifier $h_\alpha$:
\begin{equation}
\text{Input}(28, 28) \rightarrow \text{Conv}_{3,1}(1, 32) \rightarrow \text{Conv}_{3,1}(32, 64) \rightarrow \text{Linear}(9216, 128) \rightarrow \text{Linear}(128, 2) \nonumber
\end{equation}
In the above, the input image has dimension $28 \times 28$, and convolution layers with a filter size of $3$ and a stride of $1$ are applied. The notation $\text{Conv}_{3,1}(a, b)$ denotes that the input depth is $a$ and the number of filters is $b$ for the convolution. Finally, the notation $\text{Linear}(a, b)$ denotes a affine transformation with input dimension $a$ and output dimension $b$. 

\paragraph{CIFAR-10 Experiments} We use the DLA Architecture \citep{yu2018deep} for the neural network classifier. For obtaining $h_\alpha$, we use standard noise augmentation on the CIFAR-10 dataset, i.e., we augment every training sample with noise sampled from $p_\alpha$. For obtaining the smoothed classifier ${\rm Smooth}_\beta(h_\alpha)$, we use $200$ noise samples from $p_\beta$ for every image. We report results aggregated over $4000$ images from the CIFAR-10 test set. 


\section{Derivation of Bayes Classifiers} \label{app:bayes_classifier}
Recall that $X$ and $Y$ denote the data and label random variables, respectively, such that $(X,Y) \sim p_{X,Y}$. In this binary setting, the Bayes classifier, i.e. the classifier that minimizes the probability of misclassification, can be obtained by thresholding the conditional expectation $\E[Y| X = x]$ at $0.5$. We assume that the learning algorithm results in a predicted score $h(x) = \E[Y| X = x]$. The prediction of this model is given by $\psi(h)$. 

Now, we instead train on noisy data, $X_s \sim p^s_X$, where $X_s = X + V$ such that $V \sim p_V$. It is well known that $p^s_X = p_X * p_V$, where $*$ denotes the convolution operator \citep{larsen2005introduction}. Further, we can show now that the conditional expectation is $\E[Y| X_s = x] = h * p_V$:
\begin{align}
\E[Y | X_s = x] &= p(Y = 1 | X_s = x) = p(Y = 1 | X + V = x) \nonumber \\
&= \int_v p(Y = 1 | X + V = x, V = v) p_V(v) dv \nonumber \\
&= \int_v p(Y = 1 | X = x - v, V = v) p_V(v) dv \nonumber \\
&= \int_v p(Y = 1 | X = x - v) p_V(v) dv \label{eq:xvindep} \\
&= \int_v h(x - v) p_V(v) dv = (h * p_V)(x) \nonumber,
\end{align}
where \eqref{eq:xvindep} uses the fact that the noise variable $V$ is independent of the data variable $X$. 
%
As earlier, the Bayes classifier is now given by a thresholding of the conditional expectation, i.e., $\psi(h * p_V)$.

\section{Nice Distributions and their Properties}
\emph{Nice Distributions.} A probability distribution $p$ over $\R^d$ parameterized by a scalar $\alpha \in \mathbb R$ is defined to be \emph{nice}, if $p$ satisfies the properties \eqref{eq:decarg} and \eqref{eq:sph}: 
\begin{align}
\text{(Decreasing in argument norm)} \qquad p_\alpha(x_1) &\geq p_\alpha(x_2), \quad \text{if } \|x_1\|_2 \leq \|x_2\|_2 \label{eq:decarg}\\
\text{(Spherically symmetric)} \qquad p_\alpha(x_1) &= p_\alpha(x_2), \quad \text{if } \|x_1\|_2 = \|x_2\|_2 \label{eq:sph}
\end{align}
For ease of understanding of the rest of the section, one can think of $p$ to be the uniform distribution supported on a ball of radius $\theta$, i.e., $p_\theta = \text{Unif}(B_{\ell_2}(0, \theta))$, which is a nice distribution. 

\emph{Shifted CDF.} We define the multivariate cumulative distribution function (CDF) w.r.t. $x \in \X \subset \R^d$ as
\begin{equation}
\Phi_\alpha(r, x) = \int_{\|t\|_2 \leq r} p_\alpha(x - t) dt \label{eq:cumdef}
\end{equation}
For nice distributions $p$, we can show the following properties for the CDF: 
\begin{align}
\text{(Decreasing in argument Norm)} \qquad \Phi_\alpha(r, x_1) &\leq \Phi_\alpha(r, x_2), \quad \text{if } \|x_1\|_2 \geq \|x_2\|_2 \label{eq:cumdecarg}\\
\text{(Spherically symmetric)} \qquad \Phi_\alpha(r, x_1) &= \Phi_\alpha(r, x_2), \quad \text{if } \|x_1\|_2 = \|x_2\|_2 \label{eq:cumsym}\\
\text{(Increasing in radius)} \qquad \Phi_\alpha(r_1, x) &\geq \Phi_\alpha(r_2, x), \quad \text{if } r_1 \geq r_2
\end{align}

To see \eqref{eq:cumsym}, we choose any $x_1$, $x_2$ with $\|x_1\| = \|x_2\|$. Now, note that the integral in \eqref{eq:cumdef} can be rewritten as $\Phi_\alpha(r, x) = \int_{t \in B_{\ell_2}(x, r)} p_\alpha(t) dt$, which in other words computes the mass of the $p_\alpha$ contained in an $\ell_2$ ball of radius $r$ around $x$. Since $\|x_1\| = \|x_2\|$, we can find a rotation matrix $Q$ that maps $x_1$ to $x_2$ as $Q x_1 = x_2$. Observe that the same rotation matrix $Q$ also maps the entire ball $B(x_1, r)$ to the ball $B(x_2, r)$. Finally, \eqref{eq:cumsym} follows from by observing that $p_\alpha(x) = p_\alpha(Q x)$ for any rotation matrix $Q$ as $\|x\|_2 = \|Q x\|_2$.

Now that we know that $\Phi_\alpha(r, x)$ is spherically symmetric in the second argument, we can focus our attention to the restriction of $\Phi$ along the standard basis vector $e_1$ for showing \eqref{eq:cumdecarg}. For any $r > 0$, consider the function $\phi \colon \R \to \R$ defined as $\phi(c) = \Phi(r, c e_1)$ for a scalar $c$. Points $x_1$, $x_2$ with $\|x_1\| \geq \|x_2\|$ correspond to $c_1 = \|x_1\|$ and $c_2 = \|x_2\|$ in the sense that $\Phi(r, x_1) = \phi(c_1)$ and $\Phi(r, x_2) = \phi(c_2)$, and $c_1 \geq c_2 \geq 0$. Define $D_1 = B_{\ell_2}(c_1 e_1, r)$ and $D_2 = B_{\ell_2}(c_2 e_1, r)$. We have,  
\begin{align}
\phi(c_1) 
&= \int_{t \in D_1} p_\alpha(t) dt \nonumber \\
&= \int_{t \in (D_1 \cap D_2)} p_\alpha(t) dt + \int_{t \in (D_1 \setminus D_2) } p_\alpha(t) dt \nonumber \\
&\leq \int_{t \in (D_1 \cap D_2)} p_\alpha(t) dt + \int_{t \in (D_2 \setminus D_1)} p_\alpha(t) dt \label{eq:d1d2}\\
&= \phi(c_2) \nonumber.
\end{align}
To obtain the inequality in \eqref{eq:d1d2}, observe that $D_1, D_2$ are spheres of the same radius with their centers $c_1 e_1, c_2 e_2$ such that $0 \leq c_2 \leq c_1$. This ensures that any point in $D_2 \setminus D_1$ has a lower $\ell_2$ norm than any point in $D_1 \setminus D_2$ 
, which in turn means that the every term in the integral $\int_{t \in (D_2 \setminus D_1)} p_\alpha(t) dt$ is lower bounded by every term in the integral $\int_{t \in (D_1 \setminus D_2) } p_\alpha(t) dt$, using Property \eqref{eq:decarg} of $p_\alpha$. 
%

Due to properties \eqref{eq:cumdecarg} and \eqref{eq:cumsym}, we see that $\Phi_\alpha(r, x)$ is non-increasing as $\|x\|_2$ increases. As a result, we can define the function $A_{\alpha, r} \colon \R \to \R$ as
\begin{equation}
A_{\alpha, r}(c) = \max_{\{x \colon \Phi_\alpha(r, x) \geq c\}} \|x\|_2, \label{eq:norminverse}
\end{equation}
which computes the maximum $\ell_2$ norm that $\|x\|_2$ can take before the CDF falls below $c$. This function will be useful to us later when we reason about how much does smoothing a classifier  \emph{contract} or \emph{expand} its positive regions. 

We firstly note that $A_{\alpha, r}$ is decreasing in $c$. This can be seen as increasing $c$ makes the constraint tighter in \eqref{eq:norminverse}, and hence the optimal objective value can only decrease when we increase $c$.

Secondly, we note that $A_{\alpha, r}(\Phi_\alpha (r, x_0)) = \|x_0\|_2$, for any $x_0 \in \X$, $\alpha \geq 0$ and $r \geq 0$. This can be seen by observing that for $x = x_0$, we have $c_0 = \Phi_{\alpha}(r, x_0)$, and thus $x_0$ is a feasible point for the optimization problem $\max_{\{x \colon \Phi_\alpha(r, x) \geq c_0\}} \|x\|_2$. Hence $A_{\alpha, r}(c_0) \geq \|x_0\|_2$. Then, by property \eqref{eq:cumdecarg}, we know that $\Phi_\alpha(r, x') \leq \Phi_\alpha(r, x_0)$ whenever $\|x'\|_2 \geq \|x_0\|_2$, implying that any such $x'$ will not be feasible. This shows that $A_{\alpha, r}(c_0) \leq \|x_0\|_2$, completing the argument.

\section{Proof of \cref{lem:unifalpha0bestexpanded}} \label{sec:proofunifalpha0bestexpanded}

\alphazerobest*
\proof
Recall that $h(x) = p(Y = 1| X = x)$. Let $I_1, I_2, \ldots, I_k$ be spheres such that $I_j = B_{\ell_2}(c_j, r_j)$. Define $h(x) =  0.5 + \tau$ whenever $x \in I_1 \cup I_2 \ldots \cup I_k$, and $0$ otherwise. 

\paragraph{Large Interference Distance} Recall that $\underline \zeta_h = \min_{i \neq j} {\rm dist}(I_i, I_j)$, and assume that $\underline \zeta_h > \max(\alpha, \beta)$. 

Recall the CDF $\Phi_\alpha(x, r) = \int_{\|t\|_2 \leq r} p_\alpha(x - t) dt$, which is a function decreasing monotonically in $\|x\|_2$. Also recall the function $A_{\alpha, r}$ with the property $A_{\alpha, r}(\Phi_\alpha(x, r)) = \|x\|_2$ for all $x$. For any $x \in I_j$, we can obtain $h * p_\alpha$ as
\begin{align*}
    h * p_\alpha(x) 
    &= \int_{I_j} h(t) p_\alpha(x - t) dt + \int_{\X \setminus I_j} h(t) p_\alpha(x - t) dt \\
    &= \int_{I_j} (0.5 + \tau) p_\alpha(x - t) dt + \int_{\X \setminus I_j} h(t) \cdot 0 \ dt \\
    &= (0.5 + \tau) \Phi_{\alpha}(x - c_j, r_j).
\end{align*}

\paragraph{$\alpha-$Shrinkage} The regions where the classifier $\psi(h * p_\alpha)$ predicts $1$ are given by $I_{j, \alpha} = \{x \in I_j \colon h * p_\alpha \geq 0.5\}$. This region $I_{j, \alpha}$ is an $\alpha$-shrinkage of $I_j$ and can be obtained as
\begin{align}
    (0.5 + \tau) \Phi_{\alpha}(x - c_j, r_j) \geq 0.5 \implies \|x - c_j\|_2 \leq A_{\alpha, r_j}\left(\frac{0.5}{0.5 + \tau}\right) \overset{\rm def}{=} r_{j, \alpha}, \label{eq:unifalphashrinkage}
\end{align}
where we have used the definition of the function $A_{\alpha, r}$.

Now, the regions where the classifier ${\rm Smooth}_\beta(h * p_\alpha)$ predicts $1$ are similarly given by $I_{j, \alpha, \beta} = \{x \in I_{j, \alpha} \colon \psi(h * p_\alpha) * p_\beta \geq 0.5\}$. We can follow the same steps as earlier on the $\alpha$-shrunk balls $I_{j, \alpha} = B_{\ell_2}(c_j, r_{j, \alpha})$. This time, for $x \in I_{j, \alpha}$ we have
\begin{align*}
    {\rm Smooth}_\beta(\psi (h*p_\alpha))(x)
    &= \int_{I_{j, \alpha}} \psi(h * p_\alpha)(t) p_\beta(x - t) dt + \int_{\X \setminus I_{j, \alpha}} \psi(h * p_\alpha)(t) p_\beta(x - t) dt\\
    &= \int_{I_{j, \alpha}} 1 \cdot p_\beta(x - t) dt 
    = \Phi_\beta(x - c_j, r_{j, \alpha}).
\end{align*}

\paragraph{$\beta-$Shrinkage} Again, we can find the $\beta$-shrinkage in $I_{j, \alpha}$ as 
\begin{align*}
    \Phi_\beta(x - c_j, r_{j, \alpha}) \geq 0.5 \implies \|x - c_j\|_2 \leq A_{\alpha, r_{j, \alpha}} (0.5) \overset{\rm def}{=} r_{j, \alpha, \beta}.
\end{align*}

We have thus obtained $I_{j, \alpha, \beta}$ as the $\alpha, \beta$-shrunk balls, i.e., $I_{j, \alpha, \beta} = B_{\ell_2}(c_j, r_{j, \alpha, \beta})$. We can now compute the risk of the smoothed classifier as
\begin{align*}
    &R({\rm Smooth}_\beta(\psi(h * p_\alpha))) \\
    &\quad= \int_\Y \int_\X |{\rm Smooth}_\beta(\psi(h * p_\alpha))(x) - Y| p(x, y) dx dy \\
    &\quad= \int_\X |{\rm Smooth}_\beta(\psi(h * p_\alpha))(x) - 1| p(1|x) p_X(x) dx + \int_\X |{\rm Smooth}_\beta(\psi(h * p_\alpha)(x)| p(0|x) p_X(x) dx.
\end{align*}

Let $\I = I_1 \cup I_2 \ldots \cup I_k$ be the positive regions, and $\I_{\alpha, \beta} = I_{1, \alpha, \beta} \cup I_{2, \alpha, \beta} \ldots \cup I_{k, \alpha, \beta}$ be the shrunk positive regions. For the first term above, we note that 
\begin{equation*}
    |{\rm Smooth}_\beta(\psi(h * p_\alpha))(x) - 1| p(1|x) = 
    \begin{cases}
        0, &\quad x \in \I_{\alpha, \beta} \\
        0.5 + \tau, &\quad x \in \I \setminus \I_{\alpha, \beta} \\
        0, &\quad x \in \X \setminus \I
    \end{cases}.
\end{equation*}

Similarly, for the second term, we see that
\begin{equation*}
    {\rm Smooth}_\beta(\psi(h * p_\alpha))(x) p(0|x) = 
    \begin{cases}
        0.5 - \tau, &\quad x \in \I_{\alpha, \beta} \\
        0, &\quad x \in \I \setminus \I_{\alpha, \beta} \\
        0, &\quad x \in \X \setminus \I
    \end{cases}.
\end{equation*}

Substituting into the integrals, we obtain the risk of the smoothed classifier
\begin{align*}
    R({\rm Smooth}_\beta(\psi(h * p_\alpha))) 
    &= (0.5 + \tau) p_X(\I \setminus \I_{\alpha, \beta}) + (0.5 - \tau) p_X(\I_{\alpha, \beta}) \\
    &= (0.5 + \tau) p_X (\I) - 2 \tau p_X(\I_{\alpha, \beta}).
\end{align*}

Recalling that $p_X(\cdot)$ is positive everywhere in the domain, and using $\I_{0, \beta} \supset \I_{\alpha, \beta}$ for $0 < \alpha, \beta < \underline \zeta_h$ in the above, we obtain
\begin{equation*}
p_X(\I_{0, \beta}) > p_X(\I_{\alpha, \beta}) \implies -2\tau p_X(\I_{\alpha, \beta}) > -2\tau p_X(\I_{0, \beta}).
\end{equation*}

In other words, we have
\begin{equation*}
    R({\rm Smooth}_\beta(\psi(h * p_\alpha))) > R({\rm Smooth}_\beta(\psi(h))) \implies \Delta_{\alpha, \beta} > \Delta_{0, \beta}. \quad \qedsymbol
\end{equation*}

\section{Proof of \cref{th:main_full}} \label{sec:proofmainfull}
\mainfull*
\begin{definition}[Inradius] The \emph{inradius} of a set $S$ is defined as the largest radius $r^*$ such that an $\ell_2$ ball of radius $r^*$ is completely contained in $S$. In other words, there exists $x \in S$ such that $B_{\ell_2}(x, r^*) \subseteq S$.  
\end{definition}

In order to bound $\Delta_{\alpha, \beta}(h)$, we will follow a strategy which generalizes the arguments in \cref{lem:unifalpha0bestexpanded} to any arbitrary conditional distribution $h(x) = p(Y = 1 | X = x)$. Our first step would be to upper-bound the positive partitions of $\psi(h * p_\alpha)$, i.e., the regions where the classifier trained with noise-augmentation predicts $1$. This will be then be used along with properties of randomized smoothing to upper-bound the positive partitions of ${\rm Smooth}_\beta(\psi(h*p_\alpha))$, i.e., the regions where the final smoothed classifier predicts $1$. These regions would finally be used to upper bound the risk of the smoothed classifier as a function of $\alpha$ and $\beta$.

\begin{proof}
Let $\I^+$ denote the set of maximal, non-empty simply connected subsets $I \subseteq \X$ of the input space such that $h(x) \geq 0.5$ for each $x \in I$. Pick any $I \in \I^+$ and let $I_\tau \subseteq I$ be such that $h(x) \geq 0.5 + \tau$ for each $x \in I_\tau$ and $0 \leq \tau < 0.5$. 

\paragraph{Bounding region where $\bm{\psi(h * p_\alpha)(x) = 1}$.}
Let $J_1 \subseteq I_\tau$ be the maximal subset of $I_\tau$ such that the noise-trained classifier $\psi(h * p_\alpha)$ predicts $1$ in $J_1$, i.e. $J_1 = \{x \in I_\tau \colon C_1(x) \text{ is satisfied }\}$, where $C_1(x)$ is the condition $(h * p_\alpha)(x) \geq 0.5$. We will firstly lower bound the area in $J_1$ by using  a stronger condition $C_2(x) \equiv \int_{I_\tau - x} (\tau + 0.5) p_\alpha (\delta) d \delta \geq 0.5$. To show that $C_2$ is a stronger condition than $C_1$, we will show that $C_2(x) = \true$ implies $C_1(x) = \true$, as follows:
\begin{align*}
(h*p_\alpha)(x)
&= \int_\X h(x - \delta) p_\alpha(\delta) d\delta  \\
&\geq \int_{I_\tau} h(\delta) p_\alpha(x - \delta) d\delta \\
&\geq \int_{I_\tau} (\tau + 0.5) p_\alpha(x - \delta) d\delta = \int_{I_\tau - x} (\tau + 0.5) p_\alpha(\delta) d\delta
\end{align*}
Defining $J_2 = \{x \in I_\tau \colon C_2(x) = \true\}$, we see that $J_2 \subseteq J_1$. We will further lower bound $J_2$ by considering the largest $d$-dimensional ball centered at $0$ that can be inscribed in $I_\tau - x$ for each $x$. Define the condition $C_3(x, r)$ as the following:
\begin{align}
C_3(x, r) &\equiv \left( C_2(x) = \true \text{ and } B(x, r) \subseteq I_\tau \right) \nonumber \\
&\equiv \left( \underbrace{\int_{I_\tau - x} (\tau + 0.5) p_\alpha(\delta) d\delta \geq 0.5}_{\rm I} \text{ and } \underbrace{B(x, r) \subseteq I_\tau}_{\rm II} \right) \label{eq:c3}
\end{align}
The set $J_3(r) = \{x \in I_\tau: C_3(x, r) = \true\}$ is a subset of $J_2$ for all $r > 0$. In words, $J_3(r)$ denotes the set of points in $J_2$ which are at least $r$ distance away from the boundary of $I_\tau$. Finally, we combine the two conditions ${\rm I}$ and ${\rm II}$ in $C_3(x, r)$ to obtain our final condition $C_4(x, r)$. 

\begin{align}
C_4(x, r) \equiv \int_{B(0, r)} (\tau + 0.5) p_\alpha(\delta) d\delta \geq 0.5 \text{ and } B(x, r) \subseteq I_\tau \label{eq:c4}
\end{align}

Note that $B(x, r) \subseteq I_\tau$ is the same as $B(0, r) \subseteq I_\tau - x$. Hence, the integral in \eqref{eq:c4} is over a smaller set than in \eqref{eq:c3}, from where it follows that $C_4$ is a stronger condition than $C_3$ and the set $J_4(r) = \{x \in I_\tau \colon C_4(x, r) = \true\}$ is a subset of $J_3(r)$ for all $r > 0$. 

We now simplify $C_4(r)$ by using the distribution of $\|z\|_2^2$ where $z \sim p_\alpha$, as follows. 
\begin{align*}
\int_{B(0, r)} p_\alpha(\delta) d\delta 
&= \Pr_{z \sim p_\alpha}[\|z\|_2^2 \leq r^2] \\
&= \Psi_\alpha (r^2) 
\end{align*}
In the above, $\Psi_\alpha$ is the CDF of the distribution of $\|z\|_2^2$ when $z \sim p_\alpha$. Setting $ (\tau + 0.5) \Psi_\alpha(r^2) \geq 0.5$ gives us the following condition 
\begin{align*}
C_5(x) 
&\equiv \left[ B(x, r) \subseteq I_\tau \text{ such that } r \geq \sqrt{\Psi_\alpha^{-1}\left( \frac{0.5}{0.5 + \tau} \right)} \right]  \\
&\equiv \left[ B\left( x, \sqrt{\Psi_\alpha^{-1}\left( \frac{0.5}{0.5 + \tau} \right)} \right) \subseteq I_\tau \right]
\end{align*}

Our final subset $J_5(\alpha) = \{x \in I_\tau \colon C_5(x) = \true\}$ denotes the set of points in $I$ which are at least $r_\alpha = \sqrt{\Psi_\alpha^{-1}\left( \frac{0.5}{0.5 + \tau} \right)}$ away from the boundary of $I_\tau$. In other words, $J_5(\alpha) \subseteq I_\tau$ is a $r_\alpha$-contraction of $I_\tau$.


\paragraph{Bound region where $\bm{\text{Smooth}_\beta(\psi(h * p_\alpha)) = 1}$.} 
We will now follow the same general strategy as in the previous part of the proof. Let $K_1 \subseteq J_5(\alpha)$ be the maximal subset of $J_5(\alpha)$ such that the smoothed classifier $\text{Smooth}_\beta(\psi(h * p_\alpha)$ predicts $1$ in $K_1$, i.e., $K_1 = \{x \in J_5(\alpha) \colon D_1(x) \text{ is satisfied }\}$, where $D_1(x)$ is the condition $(\text{Smooth}_\beta(\psi(h * p_\alpha))(x) \geq 0.5$. We will firstly lower bound the area in $K_1$ by using  a stronger condition $D_2(x) \equiv \int_{J_5(\alpha) - x} p_\beta (\delta) d \delta \geq 0.5$. To show that $D_2$ is a stronger condition than $D_1$, we will show that $D_2(x) = \true$ implies $D_1(x) = \true$, as follows:
\begin{align*}
(\text{Smooth}_\beta(\psi(h * p_\alpha))(x)
&= \int_\X (\psi(h * p_\alpha))(x - \delta) p_\beta(\delta) d\delta  \\
&\geq \int_{J_5(\alpha)} (\psi(h * p_\alpha))(\delta) p_\beta(x - \delta) d\delta \\
&= \int_{J_5(\alpha)} p_\beta(x - \delta) d\delta = \int_{J_5(\alpha) - x} p_\beta(\delta) d\delta
\end{align*}
Defining $K_2 = \{x \in J_5(\alpha) \colon D_2(x) = \true\}$, we see that $K_2 \subseteq K_1$. We will further lower bound $K_2$ by considering the largest $d$-dimensional ball centered at $0$ that can be inscribed in $J_5(\alpha) - x$ for each $x$. Define the condition $D_3(x, r)$ as the following:
\begin{align}
D_3(x, r) &\equiv \left( D_2(x) = \true \text{ and } B(x, r) \subseteq J_5(\alpha) \right) \nonumber \\
&\equiv \left( \int_{J_5(\alpha) - x} p_\beta(\delta) d\delta \geq 0.5 \text{ and } B(x, r) \subseteq J_5(\alpha) \right) \label{eq:d3}
\end{align}
The set $K_3(r) = \{x \in J_5(\alpha): D_3(x, r) = \true\}$ is a subset of $K_2$ for all $r > 0$. In words, $K_3(r)$ denotes the set of points in $K_2$ which are atleast $r$ distance away from the boundary of $J_5(\alpha)$. Finally, we combine the two conditions in $D_3(x, r)$ to obtain our final condition $D_4(x, r)$. 
\begin{align}
D_4(x, r) \equiv \int_{B(0, r)} p_\beta(\delta) d\delta \geq 0.5 \text{ and } B(x, r) \subseteq J_5(\alpha) \label{eq:d4}
\end{align}
Note that $B(x, r) \subseteq J_5(\alpha)$ is the same as $B(0, r) \subseteq J_5(\alpha) - x$. Hence, the integral in \eqref{eq:d4} is over a smaller set than in \eqref{eq:d3}, from where it follows that $D_4$ is a stronger condition than $D_3$ and the set $K_4(r) = \{x \in J_5(\alpha) \colon D_4(x, r) = \true\}$ is a subset of $K_3(r)$ for all $r > 0$. 

We now simplify $K_4(r)$ by using the distribution of $\|z\|_2^2$ where $z \sim p_\beta$, as follows. 
\begin{align*}
\int_{B(0, r)} p_\beta(\delta) d\delta 
&= \Pr[\|z\|_2^2 \leq r^2] \text{ where } z \sim p_\beta \\
&= \Psi_\beta (r^2) 
\end{align*}
In the above, $\Psi_\beta$ is the CDF of the distribution of $\|z\|_2^2$ when $z \sim p_\beta$. Setting $ \Psi_\beta(r^2) \geq 0.5$ gives us the following condition 
\begin{align*}
D_5(x) 
&\equiv \left[ B(x, r) \subseteq J_5(\alpha) \text{ such that } r \geq \sqrt{\Psi_\beta^{-1}(0.5)} \right]  \\
&\equiv \left[ B\left( x, \sqrt{\Psi_\beta^{-1}(0.5)} \right) \subseteq J_5(\alpha) \right]
\end{align*}

Our final subset $K_5(\alpha, \beta) = \{x \in I_\tau \colon D_5(x) = \true\}$ denotes the set of points in $I$ which are at least $r_{\alpha, \beta} = r_\alpha + \sqrt{\Psi_\beta^{-1}(0.5)}$ away from the boundary of $I_\tau$. In other words, $K_5(\alpha, \beta) \subseteq I_\tau$ is a $r_{\alpha, \beta}$-contraction of $I_\tau$.


We recall that $h$ is such that the subset $I_{\tau}$ has inradius $\omega_{h, \tau}$. By the definition of inradius, this means that there exists a ball $B(\hat x, \omega_{h, \tau})$ for some $\hat x \in I_\tau$ such that $B(\hat x, \omega_{h, \tau}) \subseteq I_\tau$. Hence, the contraction of $I_\tau$ by $r_{\alpha, \beta}$, i.e., $K_5(\alpha, \beta)$, is a superset of the contraction of $B(\hat x, \omega_{h, \tau})$ by $r_{\alpha, \beta}$, 
\begin{equation}
B(\hat x, \omega_{h, \tau} - r_{\alpha, \beta}) \subseteq K_5(\alpha, \beta). \label{eq:klowerbd} \tag{\tikzcircle{2pt}}
\end{equation}


\paragraph{Risk Upper Bound}
Recall that $X \sim p_X, Y \sim p_Y$ are the data and the response variables respectively. Let $S$ be the random variable obtained from the smoothed, noise-augmented classifier as $S = {\rm Smooth}_\beta(\psi(h * p_\alpha))(X)$. Let $T$ be the random variable obtained from the original classifier $T = \psi(h)(X)$. We know that the risk of the original classifier is given by
\begin{equation*}
	R(\psi(h)) = \Pr[\psi(h)(X) \neq Y] = p(T = 1, Y = 0) + p(T = 0, Y = 1).
\end{equation*}

Define $\I^+ = \{x \colon \psi(h)(x) = 1\}$ and $\I^- =  \{x \colon \psi(h)(x) = 0\}$ as the regions of the input space classified as $1$ and $0$ respectively by the base classifier. With this definition, we can rewrite the base risk as
\begin{equation*}
	R(\psi(h)) =  p(X \in \I^+, Y = 0) + p(X \in \I^-, Y = 1).
\end{equation*}

Let $\K^+ \subseteq \I^+$ be any subset classified as $1$ by the smoothed classifier, i.e., 
\begin{equation*} 
\K^+ \subseteq \{x \colon {\rm Smooth}_\beta(\psi(h * p_\alpha))(x) = 1\} \cap \I^+.
\end{equation*}

Similarly, 
\begin{equation*} 
\K^- \subseteq \{x \colon {\rm Smooth}_\beta(\psi(h * p_\alpha))(x) = 0\} \cap \I^-.
\end{equation*}

We will think of the sets $\K$ as shrunk versions of $\I$, and where the smoothed classifier predicts the correct label. Specifically, we will let $\K^+$ be the union of all the contractions $K_5(\alpha, \beta)$ of the positive partitions of $\psi(h)$, where $K_5(\alpha, \beta)$ was obtained earlier. Similarly, we will let $\K^-$ be the union of the contractions of all the negative partitions of $\psi(h)$. This ensures that 
\begin{equation*}
\bar \K \overset{{\rm def}}{=} \X \setminus (\K^+ \cup \K^-) = \X \setminus \bigcup_k K_{5, k}(\alpha, \beta),
\end{equation*}
where we define $\bar \K$ as the rest of the space.

The risk of the smoothed classifier is
\begin{align*}
R({\rm Smooth}_\beta(\psi(h * p_\alpha))) &= p(S = 1, Y = 0) + p(S = 0, Y = 1),
\end{align*}
where the first term decomposes into
\begin{align*}
p(S = 1, Y = 0) 
&= p(S = 1, Y = 0, X \in \K^+) + p(S = 1, Y = 0, X \in \bar \K) \\
&= p(Y = 0, X \in \K^+) + p(S = 1, Y = 0, X \in \bar \K),  
\end{align*}
noting that $p(S = 1, Y = 0, X \in \K^-) = 0$, as $X \in \K^-$ implies $S = 0$. Similarly, the second term decomposes as
\begin{align*}
p(S = 0, Y = 1) &= p(Y = 1, X \in \K^-) + p(S = 0, Y = 1, X \in \bar \K).
\end{align*}
Combining, we have
\begin{align*}
R({\rm Smooth}_\beta(\psi(h * p_\alpha))) &= p(Y = 0, X \in \K^+) + p(Y = 1, X \in \K^-) + \\
&\quad p(S = 1, Y = 0, X \in \bar \K) + p(S = 0, Y = 1, X \in \bar \K) \\
&= p(Y = 0, X \in \K^+) + p(Y = 1, X \in \K^-) + p(S \neq Y, X \in \bar \K).
\end{align*}
The above expression says that the errors made by the smoothed classifier can be decomposed into the errors made in the regions $\K$, and the error everywhere else $\bar \K$. Now, since $\K^+ \subseteq \I^+, \K^- \subseteq \I^-$, we have
\begin{align}
R({\rm Smooth}_\beta(\psi(h * p_\alpha))) &\leq p(Y = 0, X \in \I^+) + p(Y = 1, X \in \I^-) + p(S \neq Y, X \in \bar \K) \nonumber \\
&\leq R(\psi(h)) + p(S \neq Y, X \in \bar \K) \nonumber \\
\implies R({\rm Smooth}_\beta(\psi(h * p_\alpha))) &\leq R(\psi(h)) + p(X \in \bar \K)
\implies \Delta_{\alpha, \beta}(h) \leq p(X \in \bar \K) \tag{*} \label{ub-risk}
\end{align}

As $K_5(\alpha, \beta) \subseteq K_5(0, 0)$ for any $\alpha, \beta \geq 0$, we have $p(X \in \bar \K_{\alpha, \beta}) \geq p(X \in \bar \K_{0, 0})$, showing that the upper-bound \eqref{ub-risk} decreases as $\alpha$ increases at any fixed $\beta$.

Finally, notice that the upper-bound \eqref{ub-risk} becomes tight at $\alpha = \beta = 0$, $\K^+ = \I^+, \K^- = \I^-, \bar \K = \{\}$, as then $R({\rm Smooth}_0(\psi(h * p_0)) = R(\psi(h))$ and $p(X \in \bar \K)  = 0$. 

Finally, we can use \eqref{eq:klowerbd} in \eqref{ub-risk} to obtain
\begin{equation*}
\Delta_{\alpha, \beta} \leq 1 - \sum_k p_X(B_{\ell_2}(\hat x^k, \omega^k_{h, \tau} - r^k_{\alpha, \beta})),
\end{equation*}
for any choice of the points $\{\hat x_k\}$ such that $\hat x^k$ is such that $B_{\ell_2}(\hat x^k,  \omega^k_{h, \tau}) \subseteq I_{k, \tau}$. 

Finally,
the quantities $\Psi_\alpha^{-1}\left( \frac{0.5}{0.5 \pm \tau} \right)$ and $\Psi_\beta^{-1}(0.5)$ must be non-negative for the above balls to be well defined. This in turn implies that $\omega^k_{h, \tau} - r^k_{\alpha, \beta} \geq 0$ for all $k$, i.e., the inradius of each partition is at least as large as the amount of shrinkage that partition goes through. These requirements are \emph{not} additional constraints on $h$, but rather a property of the proof technique: whenever the inradius is too small, the partition vanishes from the upper bound. To capture this subtlety, we can write the upper bound as
\begin{equation*}
\Delta_{\alpha, \beta} \leq 1 - \sum_k p_X(B_{\ell_2}(\hat x^k, (\omega^k_{h, \tau} - r^k_{\alpha, \beta})_+)),
\end{equation*}
where $(c)_+$ denotes the positive part of $c$. 
\end{proof}


\section{Proof of \cref{lem:alphapositivebest_detailed}} \label{sec:proofalphapositivebest_detailed}
\alphapositivebest*
\begin{proof}
We will firstly derive general conditions on $h \in \H_2$ and  $\alpha_0, \beta_0 > 0$ such that $\Delta_{\alpha_0, \beta_0}(h) < \Delta_{0, \beta_0}(h)$. We will then show that the $\H_2$ as restricted by these conditions is not empty, by providing a particular $h$ that satisfies these conditions. We will assume that 
the domain is bounded. 

\paragraph{Lower-Bound on $\beta$.} We first of all require $\beta$ to be large enough such that $\psi(\psi(h)*p_{\beta_0})(x) = 0$ for all $x \in \X$. This is to ensure that without noise-augmentation, the smoothed classifier predicts $0$ everywhere. In other words, $\forall x \in \X$ we require 
\begin{equation}
\int_{t \in \X} \psi(h)(x - t) p_{\beta_0}(t) dt \leq 0.5. \label{eq:cond1}
\end{equation}
Condition \eqref{eq:cond1} implies $\beta_0 \geq \underline \beta$ for some $\underline \beta$, as will become clear later in \eqref{eq:explicitbetalb}.

\paragraph{Acceptable Range of $\alpha$.} Secondly, we require $\psi(h * p_{\alpha_0})(x) = 1$ over a set $\bar \X$, where we think of the size of $\bar \X$ as being \emph{close} to the size of $\X$ in the sense $p_X(x \in \X \setminus \bar \X) \leq \epsilon$ for a small $\epsilon$. This subset $\bar \X$ denotes the part of the input where the noise-trained classifier predicts $1$. In other words, for all $x \in \bar \X$, we require
\begin{equation}
\int_{t \in \X} h(x - t) p_{\alpha_0}(t) dt \geq 0.5. \label{eq:cond2}
\end{equation}
Condition \eqref{eq:cond2} implies lower and upper bounds $\bar \alpha \geq \alpha_0 \geq \underline \alpha$, as will become clear later in \eqref{eq:explicitalphalb}, \eqref{eq:explicitalphaub}.

\paragraph{Upper-Bound on $\beta$.} Thirdly, we require $\psi(\psi(h * p_{\alpha_0}) * p_{\beta_0})(x) = 1$ for all $x \in \hat \X$, where we again think of the size of $\hat \X$ as being \emph{close} to the size of $\bar \X$, i.e., $p_X(\bar \X \setminus \hat \X) \leq \epsilon$. This ensures that the smoothed classifier predicts $1$ in almost all of the region where the noise-trained classifier predicts $1$. In other words, for all $x \in \hat \X$, we require
\begin{align}
\int_{t \in \X} \psi(h * p_{\alpha_0})(x - t) p_{\beta_0}(t) dt \geq 0.5. \label{eq:cond3}
\end{align}
Condition \eqref{eq:cond3} implies an upper bound $\bar \beta \geq \beta_0$, as will become clear in \eqref{eq:explicitbetaub}.

\paragraph{Small Interference-Distance.} Finally, we require that the data-distribution places more mass on the label $1$, i.e.
\begin{equation}
p(Y = 1) > \frac{1}{2} + \epsilon. \label{eq:closedef}
\end{equation}
Condition \eqref{eq:closedef} ensures that $h$ has a small upper-interference distance. Informally, this is expected: \eqref{eq:closedef} says that the positive partitions of $\psi(h)$ need to have a certain mass under $p$, and the domain is bounded, so the partitions cannot be far apart. Formally, this will become clear in \eqref{eq:interferencedist} when we demonstrate a particular example $h$ where the above constraints are satisfied. Before that, we will prove that the above conditions are sufficient to guarantee $\Delta_{\alpha_0, \beta_0}(h) < \Delta_{0, \beta_0}(h)$.

Let $S_{\alpha, \beta}$ be the random variable ${\rm Smooth}_{\beta}(\psi(h * p_\alpha))(X), X \sim p_X$, and simplify $\Delta_{0, \beta_0}$ as
\begin{align*}
\Delta_{0, \beta_0}(h) 
&= p(Y \neq S_{0, \beta_0}) &\text{Using \eqref{eq:cond1}} \\
&= p(Y \neq 0) & \\
&> p(Y = 0) + 2 \epsilon &\text{Using \eqref{eq:closedef}} \\
&\geq p(Y = 0, X \in \hat \X) + \epsilon + \epsilon &\\
&\geq p(Y \neq 1, X \in \hat \X) + p_X(\bar \X \setminus \hat \X) + p_X(\X \setminus \bar \X) &\\
&= p(Y \neq S_{\alpha_0, \beta_0}, X \in \hat \X) + p(X \in \X \setminus \hat \X) &\text{Using \eqref{eq:cond3}} \\
&\geq p(Y \neq S_{\alpha_0, \beta_0}, X \in \hat \X) + p(Y \neq S_{\alpha_0, \beta_0}, X \in \X \setminus \hat \X) & \\
&= \Delta_{\alpha_0, \beta_0}(h). &
\end{align*}


We will consider the one-dimensional example $\bar h$ described in \cref{subsec:separation_small}, and show that it satisfies the above constraints. The domain can be taken to be $\X = [-0.25, 0.25]$. $\bar h$ is defined by specifying $c_1, c_2, c_3, c_4$. We can take $c_1 = -0.25$, $c_2 = -0.25 + \omega$, $c_4 = 0.25$, $c_3 = 0.25 - \omega$, for some $\omega \leq 0.25$. Here $\omega$ is the inradius of the positive partitions of $\bar h$, similar to what we saw in \cref{th:main_full}. Additionally, the upper interference distance $\overline \zeta$ is the distance between the positive partitions, i.e., $\overline \zeta = (0.25 - \omega) - (-0.25 + \omega) = 0.5 - 2 \omega$. The smoothing distribution $p$ is uniform in a ball, as $p_\theta = \text{Unif}([-\theta/2, \theta/2])$, and the marginal distribution $p_X$ is uniform over $[-0.25, 0.25]$.

The lower bound on $\beta$ \eqref{eq:cond1} then requires for all $x \in \X$, we have $
\int_{-\beta/2}^{\beta/2} \psi(h)(x - t) \cdot (1/\beta) dt \leq 0.5$.
A stronger condition is $(1/\beta) \int_{-0.25}^{0.25} \psi(h)(t) dt \leq 0.5$ which implies a lower bound on $\beta$ as
\begin{equation}
(1/\beta) (2 \omega) \leq 0.5 \implies \beta \geq 4 \omega. \label{eq:explicitbetalb}
\end{equation}

The acceptable range of $\alpha$ is defined by \eqref{eq:cond2}. We take $\bar \X = \X$ (hence $\epsilon = 0$). For $x \in [-0.25 + \omega, 0.25 - \omega]$, \eqref{eq:cond2} gives 
\begin{equation}
2 \cdot (0.25 - \omega) (1 / \alpha) \leq 0.5 \implies \alpha \geq 4 (0.25 - \omega). \label{eq:explicitalphalb}
\end{equation}

For $x \in [-0.25, -0.25 + \omega] \cup [0.25 - \omega, 0.25]$, a stronger condition to \eqref{eq:cond2} gives 
\begin{equation}
\omega (1/\alpha) \geq 0.5 \implies \alpha \leq 2 \omega.\label{eq:explicitalphaub}
\end{equation}

The upper bound on $\beta$ is given by \eqref{eq:cond3}. As $\epsilon = 0$, we have $\hat \X = \bar \X$, which gives
\begin{equation}
(0.5) \cdot (1/\beta) \geq 0.5 \implies \beta \leq 1. \label{eq:explicitbetaub}.
\end{equation}

Finally, \cref{eq:closedef} gives a condition on $\omega$:
\begin{equation}
(2 \omega) \cdot \frac{1}{0.5} > \frac{1}{2} \implies \omega > 0.125. \label{eq:interferencedist}
\end{equation}

The proof is now complete with the observation that $\omega = 0.23, \alpha = 0.1, \beta = 0.93$ satisfies these constraints, showing that $\bar h \in \H_2$ as $\Delta_{0, 0.93}(\bar h) > \Delta_{0.1, 0.93}(\bar h)$. 

Note that the condition \eqref{eq:interferencedist} on $\omega$ implied an upper-bound on the interference distance $\overline \zeta$, as $\overline \zeta = 0.5 - 2 \omega$. This shows that when $h$ satisfies the condition \eqref{eq:closedef}, $h$ has a low interference distance. 
\end{proof}

\

\section{Accommodating Errors in Learning Bayes Classifiers.} \label{sec:bayeserrors}
While obtaining a randomized-smoothed classifier in a real learning scenario, we might deviate from the Bayes classifier in any stage of the process. Specifically, we have 2 stages: (1) Learn a classifier on the noisy data $(X_s, Y)$ and (2) Smooth the resultant classifier using randomized smoothing.  

In \cref{lem:unifalpha0bestexpanded}, we assumed that we are able to exactly learn the Bayes Classifier for $(X_s, Y)$ in stage (1),  i.e. $\psi(h*p_\alpha)$. This assumption can be relaxed while maintaining the same general proof technique, albeit by paying with further slack in the obtained bounds due to the inexactness of the learned classifier. We now sketch a modification of the argument in \cref{sec:proofunifalpha0bestexpanded} that allows a bounded deviation from the Bayes classifier. 

We start with the first column of \cref{fig:separation}, i.e. $\psi(h)$, and train the classifier on data perturbed by noise $p_\alpha$. In the ideal case sketched above, this leads precisely to the second column, i.e., $\psi(h * p_\alpha)$. In the non-ideal scenario, we assume that we obtain a classifier $g$ in stage (1) after training on $(X_s, Y)$, such that for all $x$, we have $|g(x) - h * p_\alpha(x)| \leq \eta$. In other words, the maximum error in the trained classifier compared to the actual conditional is no larger than $\eta$. For clarity, we still maintain that stage (2) is done exactly, i.e. that given $\psi(g)$ we are able to obtain ${\rm Smooth}_\beta(\psi(g))$ exactly as our final randomized-smoothed classifier. 

As a result, we are interested in analysing the modified excess risk
\begin{equation}
\Delta_{\alpha, \beta}(g, h) = R({\rm Smooth}_\beta(\psi(g))) - R(\psi(h)), \label{eq:gexcessrisk}
\end{equation}
for all $(g, h)$ such that $\|g - h * p_\alpha\|_\infty \leq \eta$. \cref{eq:gexcessrisk} captures the increase in the benign risk due to randomized smoothing an imperfectly learned classifier on the noise-augmented data. Additionally, we define 
\begin{equation}
    \overline{\Delta^{\eta}_{\alpha, \beta}}(h) = \max_{\{g \colon \|g - h * p_\alpha\|_\infty \leq \eta\}} \Delta_{\alpha, \beta}(g, h), \label{eq:excessub}
\end{equation}
to be the maximum excess risk when using an imperfectly learned $g$. Similarly, we define 
\begin{equation}
    \underline{\Delta^{\eta}_{\alpha, \beta}}(h) = \min_{\{g \colon \|g - h * p_\alpha\|_\infty \leq \eta\}} \Delta_{\alpha, \beta}(g, h), \label{eq:excesslb}
\end{equation}
to be the minimum excess risk when using an imperfectly learned $g$. We now have the following modified version of \cref{lem:unifalpha0bestexpanded}.

\begin{theorem} \label{thm:NoAdvantageG}
For nice noise distributions $p_\alpha = {\rm Unif}(B_{\ell_2}(0, \alpha))$ and $p_\beta = {\rm Unif}(B_{\ell_2}(0, \beta))$, there exists $\H_1$ with interference distance $\underline{\zeta}_{\H_1}$, such that for all $h \in \H_1$ we have $\overline{\Delta^\eta_{0, \beta}}(h) < \overline{\Delta^\eta_{\alpha, \beta}}(h)$, and $\underline{\Delta^\eta_{0, \beta}}(h) < \underline{\Delta^\eta_{\alpha, \beta}}(h)$ for all smoothing parameters $\alpha, \beta$ such that $\underline{\zeta}_{\H_1} > 2 \max(\alpha, \beta)$, and $\eta < 0.5$.
\end{theorem}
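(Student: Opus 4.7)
The plan is to generalize the argument used in \cref{lem:unifalpha0bestexpanded} by tracking how the bounded learning error $\eta$ propagates through the noise-augmentation convolution and the subsequent randomized smoothing. I would take $\H_1$ to consist of conditional distributions $h$ with spherical positive partitions $I_j = B_{\ell_2}(c_j, r_j)$, $h(x) = 0.5 + \tau$ on $\bigcup_j I_j$ and $h(x) = 0$ elsewhere, with the margin $\tau$ chosen strictly greater than $\eta$ (feasible since $\eta < 0.5$), and with pairwise separation $\underline{\zeta}_{\H_1} > 2\max(\alpha,\beta)$. The extra factor of $2$ relative to \cref{lem:unifalpha0bestexpanded} is to guarantee that the $\eta$-uncertainty annuli around neighbouring partitions, after both convolution with $p_\alpha$ and smoothing by $p_\beta$, remain disjoint; this keeps the convolution integrals around each $c_j$ equal to $(0.5+\tau)\Phi_\alpha(x-c_j,r_j)$ just as in the original proof.

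\textbf{Collapsing the $\alpha=0$ case.} I would first observe that when $\alpha=0$ any admissible $g$ satisfies $\psi(g)=\psi(h)$ pointwise: on $I_j$, $g(x) \geq 0.5+\tau-\eta > 0.5$ because $\tau>\eta$, and off $\I^+$, $g(x) \leq \eta < 0.5$. Hence both the $\max$ and $\min$ over admissible $g$ reduce to the same Bayes quantity:
\begin{equation*}
\overline{\Delta^\eta_{0,\beta}}(h) \;=\; \underline{\Delta^\eta_{0,\beta}}(h) \;=\; \Delta_{0,\beta}(h).
\end{equation*}

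\textbf{Maximum case.} The inequality $\overline{\Delta^\eta_{0,\beta}}(h)<\overline{\Delta^\eta_{\alpha,\beta}}(h)$ is essentially free: the choice $g = h*p_\alpha$ is admissible (zero learning error), so $\overline{\Delta^\eta_{\alpha,\beta}}(h) \geq \Delta_{\alpha,\beta}(h)$, and \cref{lem:unifalpha0bestexpanded} gives $\Delta_{\alpha,\beta}(h) > \Delta_{0,\beta}(h)$, which closes the chain.

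\textbf{Minimum case.} The main work is showing $\underline{\Delta^\eta_{\alpha,\beta}}(h) > \Delta_{0,\beta}(h)$. For any admissible $g$ I would sandwich the positive decision region $\{\psi(g)=1\}$ near each partition between two concentric balls: an \emph{inner} ball $B(c_j, r^+_{j,\alpha,\eta})$, namely the $(0.5+\eta)$-super level set of $h*p_\alpha$, on which $\psi(g)$ is forced to be $1$; and an \emph{outer} ball $B(c_j, r^-_{j,\alpha,\eta})$, the $(0.5-\eta)$-super level set, outside of which $\psi(g)$ is forced to be $0$. The two radii are obtained exactly as in \eqref{eq:unifalphashrinkage} by inverting $\Phi_\alpha$ at the thresholds $(0.5\pm\eta)/(0.5+\tau)$, and the separation condition $\underline{\zeta}_{\H_1} > 2\max(\alpha,\beta)$ guarantees that the outer balls of distinct partitions, as well as their $p_\beta$-smoothings, remain disjoint. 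On the annular grey zone between the two balls $g$ is free, so the adversary can pick any configuration; I would then appeal to the niceness of $p_\beta$ (spherical symmetry and radial monotonicity) to argue that, regardless of how the grey zone is sliced, the post-smoothing positive region of ${\rm Smooth}_\beta(\psi(g))$ is contained in the $p_\beta$-smoothing of the outer ball $B(c_j, r^-_{j,\alpha,\eta})$. One more application of the $A_{\alpha,r}$-inversion then yields an explicit radius $r^-_{j,\alpha,\beta,\eta}$ for the smoothed ball, which is strictly smaller than the corresponding smoothed radius $r_{j,0,\beta}$ coming from $\psi(h)$ directly, because convolving with $p_\alpha$ strictly reduces the mass above any fixed threshold $<0.5+\tau$. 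Plugging this into the same risk decomposition as in \cref{sec:proofunifalpha0bestexpanded} gives $\underline{\Delta^\eta_{\alpha,\beta}}(h) > \Delta_{0,\beta}(h)$.

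\textbf{Main obstacle.} The delicate step is the monotonicity argument in the minimum case: one has to rule out the possibility that an adversarial choice of $g$ exploits the grey annulus to \emph{enlarge} the post-smoothed positive region beyond what $\psi(g)=\mathbf{1}_{B(c_j,r^-_{j,\alpha,\eta})}$ would produce. The cleanest way is to reduce everything to a comparison between spherically symmetric indicators via radial rearrangement and the fact that $p_\beta$ is decreasing in its argument norm, so that among all sets with a fixed measure inside the outer ball, the ball itself gives the largest smoothed value at each $x$. Once this rearrangement reduction is in place, the rest of the argument is a near-verbatim replay of \cref{sec:proofunifalpha0bestexpanded} with $r^-_{j,\alpha,\eta}$ playing the role of $r_{j,\alpha}$.
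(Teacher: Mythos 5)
Your overall strategy matches the paper's: the same spherical family $\H_1$, the same sandwich of $\{\psi(g)=1\}$ between the $(0.5+\eta)$- and $(0.5-\eta)$-superlevel sets of $h*p_\alpha$, and the same use of the doubled separation $\underline{\zeta}_{\H_1} > 2\max(\alpha,\beta)$ to keep neighbouring partitions from contributing to the convolution integrals. Your shortcut for the maximum case (admissibility of $g = h*p_\alpha$ combined with \cref{lem:unifalpha0bestexpanded}) is cleaner than the paper's explicit computation, though it, like your collapse of the $\alpha=0$ case, hinges on imposing $\tau > \eta$; this means your family certifies the claim only for $\eta<\tau$ rather than for all $\eta<0.5$ as the statement requires. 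Also, the radial-rearrangement machinery you invoke for the ``grey annulus'' is unnecessary: since $\psi(g) \leq \mathbb{1}[x \in B(c_j, r^{-\eta}_{\alpha})]$ pointwise in a neighbourhood of each $c_j$ (and vanishes near other partitions by the separation argument), convolving with the nonnegative kernel $p_\beta$ preserves this inequality pointwise, so containment of the post-smoothing positive region in the smoothed outer ball is immediate.

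The genuine gap is in the minimum case. You assert that the outer ball --- the $(0.5-\eta)$-superlevel set of $h*p_\alpha$, of radius $r^{-\eta}_{\alpha} = A_{\alpha,r}\bigl((0.5-\eta)/(0.5+\tau)\bigr)$ --- is a strict contraction of $I_j$, ``because convolving with $p_\alpha$ strictly reduces the mass above any fixed threshold $<0.5+\tau$.'' This is false for low thresholds. At $\|x-c_j\|_2 = r$ the value $\Phi_\alpha(x-c_j,r)$ is just below $0.5$, so whenever $(0.5-\eta)/(0.5+\tau)$ falls below that boundary value (e.g.\ $\tau=0.45$, $\eta=0.4$, which satisfies your $\tau>\eta$), the superlevel set \emph{dilates} beyond $I_j$, i.e.\ $r^{-\eta}_{\alpha} > r$, and consequently $A_{\beta, r^{-\eta}_{\alpha}}(0.5) > A_{\beta, r}(0.5)$: the best-case smoothed positive region at $\alpha>0$ can be strictly larger than the smoothed region at $\alpha=0$, so the best-case excess risk can be strictly smaller. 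Your conclusion $\underline{\Delta^\eta_{\alpha,\beta}}(h) > \underline{\Delta^\eta_{0,\beta}}(h)$ therefore does not follow from your argument in this regime. The paper confronts exactly this issue by splitting into a contraction case (Case A) and a dilation case (Case B); in the latter its lower bound degenerates to the Bayes error, which shows the dilation regime is a real obstruction and not an artifact. A complete proof must either treat the dilation case separately or restrict the construction and parameters so that it cannot occur.
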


\proof
Similar to \cref{lem:unifalpha0bestexpanded}, let $I_1, I_2, \ldots, I_k$ be spheres such that $I_j = B_{\ell_2}(c_j, r)$ for some positive radius $r > 0$. Define $h(x) =  0.5 + \tau$ whenever $x \in I_1 \cup I_2 \ldots \cup I_k$, and $0$ otherwise, for some $0 \leq \tau < 0.5$.

\paragraph{Large Interference Distance}  We assume that $\zeta_h > 2 \max(\alpha, \beta)$, where note the additional factor $2$ in the required lower bound as compared to \cref{lem:unifalpha0bestexpanded}. This stricter condition will become useful when we deal with errors in learning the bayes classifier.

\paragraph{$\alpha-$Shrinkage} Recall that we earlier analyzed the region $I_{j, \alpha} = \{x \in I_j \colon h * p_\alpha(x) \geq 0.5\}$. In this new non-ideal setting, 
it is now useful to consider the regions $I^{+\eta}_{j, \alpha}$ defined as
\begin{equation}
I^{+\eta}_{j, \alpha} = \{x \in I_{j, \alpha} \colon h * p_\alpha(x) \geq 0.5 + \eta\}.
\end{equation}
The inaccurate classifier $g$ would have $\psi(g)(x) = 1$ for all $x \in \cup_i I^{+\eta}_{i, \alpha}$, as $g(x) \geq h * p_\alpha(x) - \eta \geq 0.5$. 

Similarly, the sets $\{I^{-\eta}_{j, \alpha}\}$ can be defined as the set of maximal simply connected regions forming a partition of $\{x \colon h * p_\alpha(x) \geq 0.5 - \eta\}$. There is a natural correspondence between $I^{-\eta}_{j, \alpha}$, $I^{+\eta}_{j, \alpha}$ and $I_{j, \alpha}$, which will become clear once we obtain the explicit forms of these sets. Note that $\psi(g)(x) = 0$ for all $x \not \in \cup_i I^{-\eta}_{i, \alpha}$, as $g(x) \leq h*p_\alpha(x) + \eta \leq 0.5$.

We can characterize $I^{+\eta}_{j, \alpha}$ explicitly as the set of all $x$ satisfying
\begin{equation}
    (0.5 + \tau) \Phi_\alpha(x - c_j, r) \geq 0.5 + \eta \implies \|x - c_j\|_2 \leq A_{\alpha, r}\left( \frac{0.5 + \eta}{0.5 + \tau} \right) \overset{\text{def}}{=} r^{+\eta}_{\alpha},
\end{equation}
obtaining the shrunk ball $I^{+\eta}_{j, \alpha} = B(c_j, r^{+\eta}_{\alpha})$. Similarly, we see that $I^{-\eta}_{j, \alpha}$ is the ball $I^{-\eta}_{j, \alpha} = B(c_j, r^{-\eta}_{\alpha})$ with radius
\begin{equation*}
r^{-\eta}_{\alpha} \overset{\text{def}}{=} A_{\alpha, r}\left( \frac{0.5 - \eta}{0.5 + \tau} \right).
\end{equation*}

\paragraph{Handling Inaccuracy in $g$}
We now perform randomized smoothing on this classifier $g$, and obtain $\psi(\psi(g) * p_\beta)$. 

For $x \in I_j$ 
we have that 
\begin{align}
    {\rm Smooth}_\beta(\psi (g)) (x) &= \int_{I^{+\eta}_{j, \alpha}} \psi(g)(t) p_\beta(x - t) dt + \int_{I^{-\eta}_{j, \alpha} \setminus I^{+\eta}_{j, \alpha}} \psi(g)(t) p_\beta(x - t) dt + \int_{\X \setminus I^{-\eta}_{j, \alpha}} \psi(g)(t) p_\beta(x - t) dt.\label{eq:unifetabetaintegral}
\end{align}
We will now split the domain of the last integral in \eqref{eq:unifetabetaintegral} as 
\begin{equation*}
\X \setminus I^{-\eta}_{j, \alpha} = \left(\cup_{i \neq j} I^{-\eta}_{j, \alpha}\right) \cup \left( \X \setminus \cup_{i} I^{-\eta}_{j, \alpha} \right), 
\end{equation*}
where we note that the integral over the second set in the union, i.e., $\int_{\X \setminus \cup_i I^{-\eta}_{i, \alpha}} \psi(g)(t) p_\beta(x - t) dt$ is zero, by the observation above that $\psi(g)(t) = 0$ for all $t \not \in \cup_j I^{-\eta}_{j, \alpha}$. This leaves us with the integral over the first set, which we will now show to be $0$ as well. 

For the first set, we want to see whether there is any point $t \in I_{i, \alpha}^{-\eta}$, $i \neq j$ such that $\psi(g)(t) = 1$ and $p_\beta(x - t) > 0$. This can happen only when $g(t) \geq 0.5$ and $\|x - t\|_2 \leq \beta$. Now consider the triangle formed by the points $\{x, t, c_i\}$, and apply the reverse triangle inequality to get
\begin{equation*}
    \|t - c_i\|_2 \geq \|c_i - x\|_2 - \|x - t\|_2 \geq 2 \max(\alpha, \beta) + r - \beta \geq \alpha + r, 
\end{equation*}
where the interference distance condition was used for $\|c_i - x\| > 2 \max(\alpha, \beta)$. But now we know that $g(t) - h * p_\alpha(t) \leq \eta < 0.5$, and that $h * p_\alpha(t) = 0$ for any $t$ at a distance more than $\alpha + r$ distance away from all the centers $c_i$. This implies that $g(t) < 0.5$, which further implies that $\psi(g)(t) = 0$. Hence, the integral over the first set $\left(\cup_{i \neq j} I^{-\eta}_{j, \alpha}\right)$ is $0$.


The first term in \eqref{eq:unifetabetaintegral} is simply
\begin{align*}
 \int_{I^{+\eta}_{j, \alpha}} \psi(g)(t) p_\beta(x - t) dt &= \int_{I^{+\eta}_{j, \alpha}} 1 \cdot p_\beta(x - t) dt = \Phi_\beta(x - c_j, r^{+\eta}_{j, \alpha}).
\end{align*}

For the second term in \eqref{eq:unifetabetaintegral}, we can only produce upper and lower bounds, as
\begin{align*}
0 \leq \int_{I^{-\eta}_{j, \alpha} \setminus I^{+\eta}_{j, \alpha}} \psi(g)(t) p_\beta(x - t) dt \leq \int_{I^{-\eta}_{j, \alpha} \setminus I^{+\eta}_{j, \alpha}} 1 \cdot p_\beta(x - t) dt
\end{align*}

Combining, we have
\begin{align}
\Phi_\beta(x - c_j, r^{+\eta}_{\alpha}) \leq {\rm Smooth}_\beta(\psi (g)) (x) \leq \Phi_\beta(x - c_j, r^{-\eta}_{\alpha}) \label{eq:ulbsmoothg}
\end{align}
where the lower-bound occurs when $g(x)$ takes the lowest possible value for all $x \in I^{+\eta}_{j, \alpha} \setminus I^{-\eta}_{j, \alpha}$. Similarly, the upper-bound occurs when $g(x)$ takes the highest possible value in the same region.

\paragraph{$\beta-$Shrinkage} Compared \cref{lem:unifalpha0bestexpanded}, we now only have an inequality \eqref{eq:ulbsmoothg} for the smoothed classifier at any point $x$. As a result, we cannot determine the positive regions $I_{j, \alpha, \beta} = \{x \in I_j \colon {\rm Smooth}_\beta(\psi (g)) (x) \geq 0.5\}$ exactly. Instead, the following set inclusions follow from \eqref{eq:ulbsmoothg}:
\begin{align*}
B_{\ell_2}(c_j, A_{\alpha, r^{+\eta}_{\alpha}} (0.5)) \subseteq I_{j, \alpha, \beta} \subseteq B_{\ell_2}(c_j, A_{\alpha, r^{-\eta}_{\alpha}} (0.5)).
\end{align*}

\paragraph{Bounding Risk} Similar to \cref{lem:unifalpha0bestexpanded}, we can now compute the risk of the smoothed classifier as
\begin{align}
    &R({\rm Smooth}_\beta(\psi(h * p_\alpha))) \\
    &\quad= \int_\X |{\rm Smooth}_\beta(\psi(h * p_\alpha))(x) - 1| p(1|x) p_X(x) dx + \int_\X {\rm Smooth}_\beta(\psi(h * p_\alpha)(x) p(0|x) p_X(x) dx. \label{eq:risksplit}
\end{align}

Let $\I = I_1 \cup I_2 \ldots \cup I_k$ are the positive regions for the base classifier $\psi(h)$, and let $\I^{+\eta}_{\alpha, \beta} = \cup_j B_{\ell_2}(c_j, A_{\alpha, r^{+\eta}_{\alpha}})$ and $\I^{-\eta}_{\alpha, \beta} = \cup_j B_{\ell_2}(c_j, A_{\alpha, r^{-\eta}_{\alpha}} (0.5))$ be the upper and lower bounds to the positive regions of the smoothed classifier ${\rm Smooth}_\beta(\psi (g))$, i.e., $\I_{\alpha, \beta} = \cup_j I_{j, \alpha, \beta}$.

Let $r_{\alpha, \beta, \eta} \overset{\Delta}{=} A_{\alpha, r^{-\eta}_\alpha}(0.5)$ denote the largest possible radius of the positive partitions after smoothing. We now have two cases, $ r_{\alpha, \beta, \eta} \leq r$ (Case A), or $r < r_{\alpha, \beta, \eta}$ (Case B). When $\eta = 0$, the positive partitions always shrink (as we saw in \cref{lem:alpha0best}), and we are in Case A. As we increase $\eta$ beyond a certain $\eta \geq \eta_0$, we go into Case B, where the positive partitions have potentially dilated after smoothing. We handle both cases separately. 

\paragraph{Case A} Define $S \subseteq \I^{-\eta}_{\alpha, \beta} \setminus \I^{+\eta}_{\alpha, \beta}$ to be the subset of $\I^{-\eta}_{\alpha, \beta} \setminus \I^{+\eta}_{\alpha, \beta}$ where ${\rm Smooth}_\beta(\psi(g))(x) = 1$. For the first integral in \eqref{eq:risksplit}, we obtain 
\begin{equation*}
    |{\rm Smooth}_\beta(\psi(g))(x) - 1| p(1|x) = 
    \begin{cases}
        0, &\quad x \in \I^{+\eta}_{\alpha, \beta} \\
        0, &\quad x \in S \\
        0.5 + \tau, &\quad x \in (\I^{-\eta}_{\alpha, \beta} \setminus \I^{+\eta}_{\alpha, \beta}) \setminus S \\
        0.5 + \tau, &\quad x \in \I \setminus \I^{-\eta}_{\alpha, \beta} \\
        0, &\quad x \in \X \setminus \I
    \end{cases}.
\end{equation*}
For the second integral in \eqref{eq:risksplit}, we obtain
\begin{equation*}
    {\rm Smooth}_\beta(\psi(g))(x) p(0|x) = 
        \begin{cases}
        0.5 - \tau, &\quad x \in \I^{+\eta}_{\alpha, \beta} \\
        0.5 - \tau, &\quad x \in S \\
        0, &\quad x \in (\I^{-\eta}_{\alpha, \beta} \setminus \I^{+\eta}_{\alpha, \beta}) \setminus S \\
        0, &\quad x \in \I \setminus \I^{-\eta}_{\alpha, \beta} \\
        0, &\quad x \in \X \setminus \I
    \end{cases}. 
\end{equation*}

Substituting into the integrals, we obtain
\begin{align*}
R({\rm Smooth}_\beta(\psi(g))) = (0.5 - \tau)p_X(\I^{+\eta}_{\alpha, \beta} \cup S) + (0.5 + \tau) p_X\left( ((\I^{-\eta}_{\alpha, \beta} \setminus \I^{+\eta}_{\alpha, \beta}) \setminus S) \cup (\I \setminus \I^{-\eta}_{\alpha, \beta}) \right), 
\end{align*}
which is minimized when $S$ is as large as possible, and vice-versa. This gives us the risk bounds
\begin{align}
 (0.5 - \tau) p_X(\I^{-\eta}_{\alpha, \beta}) + (0.5 + \tau) p_X(\I \setminus \I^{-\eta}_{\alpha, \beta}) &\leq R \leq  (0.5 - \tau) p_X(\I^{+\eta}_{\alpha, \beta}) + (0.5 + \tau) p_X(\I \setminus \I^{+\eta}_{\alpha, \beta}) \nonumber \\
 \Rightarrow \underbrace{(0.5 + \tau) p_X(\I) - 2 \tau p_X(\I^{-\eta}_{\alpha, \beta}) - R(\psi(h))}_{\underline{\Delta^{\eta}_{\alpha, \beta}}(h)} &\leq \Delta_{\alpha, \beta}(g, h) \leq \underbrace{(0.5 + \tau) p_X(\I) - 2 \tau p_X(\I^{+\eta}_{\alpha, \beta}) - R(\psi(h))}_{\overline{\Delta^{\eta}_{\alpha, \beta}}(h)} \label{eq:riskapproxA}
\end{align}
As $\alpha$ increases for a fixed $\beta, \eta, \tau$, both the upper and lower bounds in \eqref{eq:riskapproxA} increase, following the same argument as \cref{lem:unifalpha0bestexpanded}. 

\paragraph{Case B}
Define $T_1 \subseteq \I^{-\eta}_{\alpha, \beta} \setminus \I$ to be the subset of $\I^{-\eta}_{\alpha, \beta} \setminus \I$ where ${\rm Smooth}_\beta(\psi(g))(x) = 1$. Define $T_2 \subseteq \I \setminus \I^{+\eta}_{\alpha, \beta}$ to be the subset of $\I \setminus \I^{+\eta}_{\alpha, \beta}$ where ${\rm Smooth}_\beta(\psi(g))(x) = 1$.  For the first integral in \eqref{eq:risksplit}, we obtain 
\begin{equation*}
    |{\rm Smooth}_\beta(\psi(g))(x) - 1| p(1|x) =
    \begin{cases}
        0, &\quad x \in \I^{+\eta}_{\alpha, \beta} \\
        0, &\quad x \in T_2 \\
        0.5 + \tau, &\quad x \in (\I \setminus \I^{+\eta}_{\alpha, \beta}) \setminus T_2 \\
        0, &\quad x \in \X \setminus \I
    \end{cases}.
\end{equation*}
For the second integral in \eqref{eq:risksplit}, we obtain
\begin{equation*}
    {\rm Smooth}_\beta(\psi(g))(x) p(0|x) =
        \begin{cases}
        0.5 - \tau, &\quad x \in \I^{+\eta}_{\alpha, \beta} \\
        0.5 - \tau, &\quad x \in T_2 \\
        0, &\quad x \in (\I \setminus \I^{+\eta}_{\alpha, \beta}) \setminus T_2 \\
        1, &\quad x \in T_1 \\
        0, &\quad x \in (\I^{-\eta}_{\alpha, \beta} \setminus \I) \setminus T_1\\
        0, &\quad x \in \X \setminus \I^{-\eta}_{\alpha, \beta}
    \end{cases}. 
\end{equation*}
Again substituting into the integrals, we obtain
\begin{equation*}
R({\rm Smooth}_\beta(\psi(g))) = (0.5 + \tau) p_X((\I \setminus \I^{+\eta}_{\alpha, \beta}) \setminus T_2) + (0.5 - \tau) p_X(T_2) + p_X(T_1) + (0.5 - \tau) p_X(\I^{+\eta}_{\alpha, \beta}), 
\end{equation*}
which is minimized when $T_1$ is empty and $T_2$ is as large as possible, and maximized when $T_1$ is as large as possible and $T_2$ is empty. This gives the risk bounds
\begin{align}
    (0.5 - \tau) p_X(\I) &\leq R({\rm Smooth}_\beta(\psi(g))) \leq (0.5 + \tau) p_X(\I) - 2 \tau p_X(\I^{+\eta}_{\alpha, \beta}) + p_X(\I^{-\eta}_{\alpha, \beta} \setminus \I) \nonumber \\
    \Rightarrow \underbrace{(0.5 - \tau) p_X(\I) - R(\psi(h))}_{\underline{\Delta^{\eta}_{\alpha, \beta}}(h)} &\leq \Delta_{\alpha, \beta}(g, h) \leq \underbrace{(0.5 + \tau) p_X(\I) - 2 \tau p_X(\I^{+\eta}_{\alpha, \beta}) + p_X(\I^{-\eta}_{\alpha, \beta} \setminus \I) - R(\psi(h))}_{\overline{\Delta^{\eta}_{\alpha, \beta}}(h)}\label{eq:riskapproxB}
\end{align}
Note that the lower bound for the risk in \eqref{eq:riskapproxB} is equal to the Bayes Error - this is expected as in the best scenario, the classifier after smoothing can be idential to the original classifier in Case B. For the upper bound, we observe that as we increase $\alpha$ at a fixed $\beta, \eta, \tau$, the set $\I^{+\eta}_{\alpha, \beta}$ shrinks, which causes the upper bound to increase. 

We have hence shown that in both Case A and Case B, the risk of the smoothed classifier increases as $\alpha$ is increased from $0$, modulo approximations due to errors in learning the bayes classifier. \qed

\paragraph{Upper Bound for General $\bm{g}$} 
We show here that via a simple application of \cref{th:main_full}, we can upper bound $\Delta_{\alpha, \beta}(g, h)$ for general $g, h$:
\begin{equation}
\Delta_{\alpha, \beta}(g, h) \leq \Delta_{\alpha, \beta}(h) + p_X\left((h * p_\alpha)(X) \neq g(X)\right) \label{eq:hgub}
\end{equation}
To show \eqref{eq:hgub}, we let $S_g$ denote the random variable ${\rm Smooth}_\beta(\psi(g))(X)$, and $S_h$ denote the random variable ${\rm Smooth}_\beta(\psi(h) * p_\alpha)(X)$. We use the following simple sequence of upper-bounds:
\begin{align*}
    \Delta_{\alpha, \beta}(g, h) &= R({\rm Smooth}_\beta(\psi(g))) - R(\psi(h)) \\
    &= p(S_g \neq Y) - R(\psi(h)) \\
    &= p(S_g \neq Y, S_h = S_g) - R(\psi(h)) + p(S_g \neq Y, S_h \neq S_g)  \\
    &\leq p(S_h \neq Y) - R(\psi(h)) + p(S_h \neq S_g)\\
    &= \Delta_{\alpha, \beta}(h) + p(S_h \neq S_g)\\
    &\leq \Delta_{\alpha, \beta}(h) + p((h * p_\alpha)(X) \neq g(X))
\end{align*}
\end{document}